\def\eqref#1{equation~\ref{#1}}
\def\1{\bm{1}}
\DeclareMathAlphabet{\mathsfit}{\encodingdefault}{\sfdefault}{m}{sl}
\SetMathAlphabet{\mathsfit}{bold}{\encodingdefault}{\sfdefault}{bx}{n}
\def\gS{{\mathcal{S}}}
\newcommand{\bs}[1]{\boldsymbol{#1}}
\DeclareMathOperator*{\argmin}{arg\,min}
\definecolor{lightgrey}{HTML}{dcdbdb}
\newcommand{\cc}[0]{\cellcolor{lightgrey}}
\newcommand{\ourmodel}{\textsc{SunGen}\xspace}
\newcommand{\zerogen}{\textsc{ZeroGen}\xspace}
\def\bbE{\mathbb{E}}
\def\bbP{\mathbb{P}}
\def\cL{\mathcal{R}}
\def\bx{\boldsymbol{x}}
\def\by{\boldsymbol{y}}
\def\bw{\boldsymbol{w}}
\def\bm{\boldsymbol{m}}
\def\b\theta{\boldsymbol{\theta}}
\def\cL{\mathcal{L}}
\def\cS{\mathcal{S}}
\def\cW{\mathcal{W}}
\def\cX{\mathcal{X}}
\def\cY{\mathcal{Y}}
\newtheorem{property}{Property}
\newtheorem{ass}{Assumption}
\newtheorem{thm}{Theorem}
\title{Self-Guided Noise-Free Data Generation for Efficient Zero-Shot Learning}
\author{
Jiahui Gao\textsuperscript{\rm 1}\footnotemark[1], 
Renjie Pi\textsuperscript{\rm 2}\footnotemark[1], 
Yong Lin\textsuperscript{\rm 2},
Hang Xu\textsuperscript{\rm 3},
Jiacheng Ye\textsuperscript{\rm 1}, \\
\textbf{ Zhiyong Wu\textsuperscript{\rm 4},
Weizhong Zhang\textsuperscript{\rm 5},
Xiaodan Liang\textsuperscript{\rm 6},
Zhenguo Li\textsuperscript{\rm 3},
Lingpeng Kong\textsuperscript{\rm 1}}
\\
$^1$The University of Hong Kong 
$^2$Hong Kong University of Science and Technology
\\
$^3$Huawei Noah’s Ark Lab 
$^4$Shanghai AI Lab
$^5$Fudan University
$^6$Sun Yat-sen University\\
\texttt{\{sumiler, carsonye\}@connect.hku.hku,}\texttt{\{rpi,ylindf\}@connect.ust.hk,} \\
\texttt{lpk@cs.hku.hk,wuzhiyong@pjlab.org.cn,\{xu.hang,li.zhenguo\}@huawei.com,} \\
\texttt{\{zhangweizhongzju,xdliang328\}@gmail.com}
}
\begin{document}

\maketitle
\renewcommand{\thefootnote}{\fnsymbol{footnote}}
\footnotetext[1]{Equal Contribution. Code is available at \href{https://github.com/SumilerGAO/SunGen}{this link}.}
\begin{abstract}
There is a rising interest in further exploring the zero-shot learning potential of large pre-trained language models (PLMs). A new paradigm called data-generation-based zero-shot learning has achieved impressive success. In this paradigm, the synthesized data from the PLM acts as the carrier of knowledge, which is used to train a task-specific model with orders of magnitude fewer parameters than the PLM, achieving both higher performance and efficiency than prompt-based zero-shot learning methods on PLMs. The main hurdle of this approach is that the synthesized data from PLM usually contains a significant portion of low-quality samples. Fitting on such data will greatly hamper the performance of the task-specific model, making it unreliable for deployment. Previous methods remedy this issue mainly by filtering synthetic data using heuristic metrics(e.g., output confidence), or refining the data with the help of a human expert, which comes with excessive manual tuning or expensive costs. In this paper, we propose a novel noise-robust re-weighting framework \ourmodel to automatically construct high-quality data for zero-shot classification problems. Our framework features the ability to learn the sample weights indicating data quality without requiring any human annotation. We theoretically and empirically verify the ability of our method to help construct good-quality synthetic datasets. Notably, \ourmodel-LSTM yields a 9.8\% relative improvement than the baseline on average accuracy across eight different established text classification tasks.

\end{abstract}

\section{Introduction}
Owing to the superior generative capacity of large-scale pre-trained language models (PLMs), there has been an emerging trend of using these powerful models (e.g., GPT) to generate training data for downstream tasks \citep[\emph{inter alia}]{DBLP:conf/aaai/Anaby-TavorCGKK20,DBLP:conf/emnlp/PuriSSPC20,DBLP:journals/corr/abs-2003-02245,DBLP:journals/corr/abs-2102-01335}.
Among them, a new line of generation-based zero-shot learning using the unfinetuned PLM pushes the envelope further \citep{SchickS21a,ye2022zerogen, meng2022generating}, featuring total annotation-free training for downstream tasks.
\citet{ye2022zerogen} (\zerogen) further boosts the efficiency by using the generated data to train tiny task models (TAM), which have orders-of-magnitude fewer parameters than the PLM.
Specifically, they first design prompts incorporating the task description and label information, then use them to guide the data generation from the PLM. Subsequently, the synthesized dataset is used to train the tiny task-specific models.
Compared with the classic prompt-based zero-shot learning on PLM, this new paradigm enjoys two favorable properties:  (1) since the task model has orders-of-magnitude fewer parameters than the PLM, it demonstrates much lower inference latency; (2) with the large amount of PLM-generated training data, the task model often shows better performance than prompt-based zero-shot PLM counterparts.

In the above paradigm, the amount and the quality of the generated data are crucial factors for the task model's performance. Unfortunately, 
{despite the unlimited training data that one can generate in theory,}
the data quality is not always guaranteed. 
Our experimental observation across many downstream tasks verifies the existence of this issue: in \zerogen, after a few training epochs on the PLM-generated dataset, although the training accuracy steadily improves, the actual test accuracy of the model starts declining rapidly (e.g., IMDb in Figure~\ref{fig:overfit}) -- a clear indication of the model overfitting to low-quality data (noisy data) \citep{arpit2017closer}.
More specifically, we identify two major cases of noisy samples in the synthetic dataset: corrupted labels and task-irrelevant samples (Table~\ref{tab:visulization} in Appendix).  
Without any task-related fine-tuning, it is challenging for PLM to follow a user's instruction (task-specific prompt including label information) to generate accurate samples in the target domain~\citep{Ouyang_follow_instruction}. To alleviate the data quality issue, 
recent work adopts human-active labeling to correct the corrupted label or revise the example~\citep{wang-etal-2021-want-reduce,liu2022wanli}. However, such methods introduce considerable costs and may be unrealistic.

\begin{wrapfigure}{r}{7cm}
\centering
\vspace{-6mm}
\includegraphics[width=0.45\textwidth]
{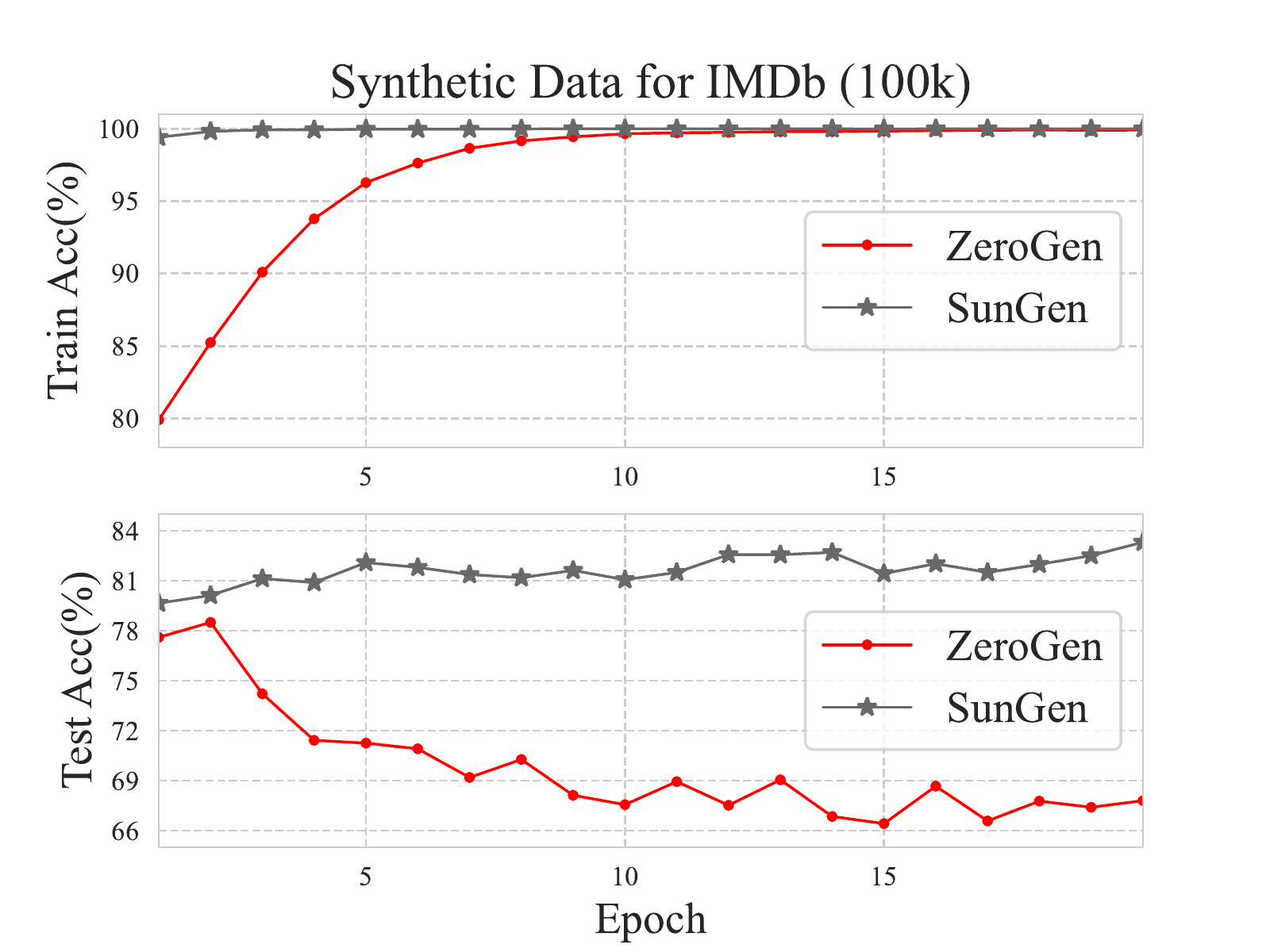}
\vspace{-5mm}
\caption{Training and testing accuracy of  LSTM model trained on synthetic dataset. After training for more epochs, 
the testing performance of \zerogen starts to deteriorate significantly, indicating that the model starts to fit the erroneous data.}
\label{fig:overfit}
\vspace{-2mm}
\end{wrapfigure}

To avoid human intervention, the classic approach to eliminate the effect of noisy data is to re-weight the samples.
The core idea is to design a weighting function $w$,
such that the correct samples are associated with larger weights and the noisy ones with smaller weights.
Compared with heuristic design of $w$ (e.g., according to output confidence, loss value) \citep{liu2015classification,zero_label}, which requires task-specific knowledge and excessive manual tuning,
the adaptive methods that learn the sample weights in an end-to-end manner demonstrate better performances in practice~\citep{ren2018learning, shu2019meta, zheng2021meta}. Those methods typically formulate the learning of sample weights into a bi-level optimization problem, with a clean validation set in the outer loop to guide the learning of $w$. 
Despite remarkable success was achieved, 
their dependence on a clean validation set becomes a major limitation,
which is especially impractical in zero-shot setting.

Our solution comes from rethinking the choice of the outer objective in the bi-level framework: \textit{can we design an objective such that the sample weights can be optimized with only access to the noisy synthetic data}? 
To this end, we resort to
a family of noise-robust loss functions ($\ell_\text{robust}$) ~\citep{ghosh2017robust, zhang2018generalized}. These functions were adopted by previous work to train the neural network under label noise due to their theoretically noise-tolerant property~\citep{ ghosh2017robust, zhang2018generalized, wang2019symmetric}. However, from the optimization point of view, such loss functions suffer from instability and difficulty when training the neural networks~\citep{zhang2018generalized}, which limits their effectiveness. Remarkably, our approach leverages the noise-tolerant property of these losses, while avoiding their pathology. We propose a novel bi-level re-weighting framework \ourmodel:
in the inner loop, we train the task model using weighted training loss based on current sample weights; in the outer loop, the noise-robust loss is adopted to guide the learning of the sample weights.
The two procedures are performed alternatively
to generate a set of weights indicating the importance of samples. 
Notably, our method focuses on enhancing the quality of generated data, while improving the generator (e.g., modify PLM parameter, prompt engineering) is an orthogonal direction and can be applied jointly with our method. 

Our main contributions are threefold.
First, we propose a novel
end-to-end framework to construct high-quality noise-free synthetic dataset, without the aid of any human annotation (\S\ref{sec:method}).
Second, we offer theoretical justification (\S\ref{sec:theory}) and empirical verification (\S\ref{sec:observation_noise}) for \ourmodel's ability of recovering a noise-free dataset reliably with synthetic data only.
Third, we conduct experiments on eight text classification datasets and show our method outperforms the current baseline by large margins~(\S\ref{sec:main_experiment}).

\section{Background}

\vspace{-0.5em}
\subsection{Prompt-based Zero-Shot Learning}

We first introduce prompt-based zero-shot prediction (named \textsc{Prompting}). Given a manually-designed prompt $\mathcal{T(\cdot)}$ and a query example $\mathbf{x}_i \in \mathcal{X}$, \textsc{Prompting} constructs a sentence $\mathcal{T}(\mathbf{x}_i)$ (e.g., \textit{``The movie review in <$y_i$> sentiment is: <$\mathbf{x}_i$> ''}). The PLM $\mathcal{P}$ is expected to model the probability distribution of a set of label words $y_i \in  \mathcal{Y}$ (e.g., ``positive'',  ``negative'') and select the class with highest probability for $\mathbf{x}_i$.
Even though \textsc{Prompting} has achieved remarkable success in zero-shot learning \citep{radfordlanguage, DBLP:conf/nips/BrownMRSKDNSSAA20}, it is difficult for \textsc{Prompting} to fully leverage the task-specific knowledge from PLMs. Besides, \textsc{Prompting} still needs to conduct inference on a cumbersome PLM. 

\vspace{-0.5em}
\subsection{Efficient Zero-Shot Learning via Data Generation\label{sec:efficient_zeroshot}}
A new line of research \citep{ye2022zerogen, meng2022generating,ye-etal-2022-progen} endeavors to make zero-shot learning more practical and efficient, among which the generative efficient zero-shot learning paradigm proposed by \zerogen~\citep{ye2022zerogen} is as follows:

\vspace{-0.5em}
\paragraph{Synthetic Data Generation.} Given a task, the paradigm first generates a synthetic dataset $\mathcal{S}_\text{syn} = (\mathcal{X}_\text{syn}, \mathcal{Y}_\text{syn}) $ 
with the help of large-scale PLM $\mathcal{P}$ and task-related prompts.
The idea is to use model $\mathcal{P}$ to generate the input $\mathbf{x}_\text{syn}$ based on a pseudo label $y_\text{syn}$.
For example, in text classification task, a class label $y_\text{syn}$ is uniformly sampled: $y_\text{syn} \sim \mathbf{U}(y_1, y_2, \ldots, y_K),$
where $K$ is the number of classes. The pseudo label $y_\text{syn}$ is transformed into a label-descriptive prompt 
$\mathcal{T}(y_\text{syn})$ 
to generate $\mathbf{x}_\text{syn}$:
$\mathbf{x}_\text{syn} \sim \mathcal{P}(\cdot|\mathcal{T}( y_\text{syn}))$.
The generated $\mathbf{x}_\text{syn}$ and pseudo label $y_\text{syn}$ are paired to construct a pseudo training dataset $\mathcal{S}_\text{syn}$.

\vspace{-0.5em}
\paragraph{Efficient Training and Inference.} To achieve efficiently training and inference, the paradigm then trains a \textbf{t}iny t\textbf{a}sk \textbf{m}odel (TAM) (e.g., 1-layer Bi-LSTM) on the synthetic dataset $\mathcal{S}_\text{syn}$. TAMs typically have much fewer parameters, which is thus easy to train and more efficient during inference.

Although \zerogen achieved promising results by training a TAM with $\mathcal{S}_\text{syn}$, we find that model's performance rapidly declines after several training epochs, indicating overfitting to noisy samples (Figure~\ref{fig:overfit}).  
A classic approach to improve the quality of a dataset containing low-quality data is to re-weight the samples using some function $w$. Intuitively, if we assign large weights to correct samples and small weights to noisy ones, the negative influence of noisy samples during training can be reduced. However, the heuristic design of $w$ (e.g., according to output confidence, loss value) suffers unstable performances and requires task-specific knowledge \citep{shu2019meta}. Therefore, our paper seeks to optimize $w$ automatically.
\section{Method}\label{sec:method}

\begin{figure*}[t]
\centering
\includegraphics[width=1.0\textwidth]{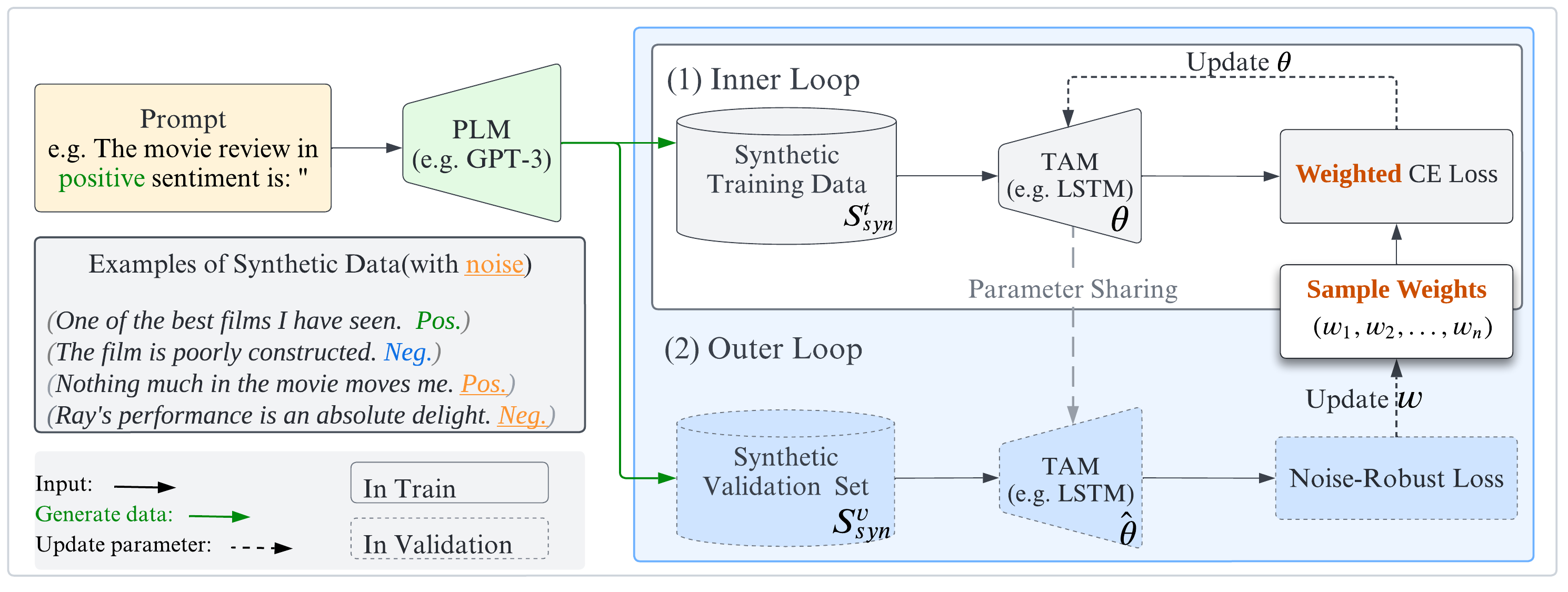}
\vspace{-5mm}
\caption{ 
The framework of \ourmodel.
Our bi-level framework learns sample weights $\bs{w}$ measuring data quality without relying on any human-annotated data. In the inner loop, we train a tiny task model (TAM) with weighted CE loss based on current sample weights, and produce trained TAM parameters $\hat{\b\theta}(\bw)$; in the outer loop, we adopt a noise-robust loss to guide the learning of $\bs{w}$ by evaluating $\hat{\b\theta}(\bw)$ on a synthetic validation set. 
}
\vspace{-5mm}
\label{fig:framework}
\end{figure*}

To automatically learn sample weights, the bi-level optimization approaches~\citep{ren2018learning, shu2019meta} {have} been proven to be effective. However, such methods are impractical in our zero-shot scenario, since they depend on a clean validation set to guide the optimization of sample weights.
To circumvent the absence of human-labeled data, we propose a  \textbf{S}elf-g\textbf{U}ided \textbf{N}oise-free data \textbf{GEN}eration framework, named \textbf{\ourmodel} (Figure~\ref{fig:framework}). Concretely, we propose a sample reweighting framework via bilevel optimization using a noise-robust loss~\citep{ghosh2017robust, wang2019symmetric, zhang2018generalized} as the outer objective. Due to the appealing property of such loss functions, our method is able to learn meaningful sample weights with only a synthetic (noisy) validation dataset. 

\paragraph{Notations.}
As elaborated in Section \ref{sec:efficient_zeroshot}, we first generate a synthetic dataset
using a left-to-right PLM
and task-related prompts for the given task. 
Let $\gS_{\text{syn}}, \gS_{\text{clean}} \subset \mathcal{X} \times \mathcal{Y} = \mathbb{R}^d \times  \{1,...,K\}$ denote distribution of synthetic (noisy) and clean (gold) data, respective. Here $d$ is the dimension of the input space and $K$ is the number of classes. We draw a training dataset $\cS^\text{t}_\text{syn}:=\{(\bx, y)_{(i)}\}^N_{i=1}$ and a validation dataset $\cS^\text{v}_\text{syn}:=\{(\bx, y)_{(j)}\}^M_{j=1}$ from $\cS_\text{syn}$. 
Denote $f(\bx,\boldsymbol{\theta})$ as the classifier (TAM) with $\boldsymbol{\theta}$ as its parameters. $w \in \cW: =\{w(\cdot, \cdot): \cX \times \cY \xrightarrow[]{} [0,1]\}$ is a re-weighting function that assign a weight to each sample.
The bold $\bs{w}$ is a sample weight vector of length $N$ indicating training samples' importance, which contains per-sample weight $w_i:=w(\bx_i,y_i)$.
\vspace{-2.5mm}
\paragraph{Overall Framework.} 
Without the presence of clean validation set, our proposed bi-level optimization framework \textbf{\ourmodel} (as shown in Figure~\ref{fig:framework}) can be outlined as follows:

\begin{small}
{
\setlength\abovedisplayskip{-4mm}
\setlength\belowdisplayskip{-2mm}
\begin{align}
\displaystyle
 ~& \boldsymbol{w}^{*} \in \argmin_{\boldsymbol{w}}\cL_\text{robust}(\hat{\b\theta}(\bw), \cS^\text{v}_\text{syn}) = \argmin_{\boldsymbol{w}}{\frac{1}{M}\sum_{\{\bx,y\}\in \cS^\text{v}_\text{syn}}\ell_\text{robust}}(f(\bx, \hat{\b\theta}(\bs{w})), y)
\label{equ:outer_loop}\\
s.t. & ~\hat{\b\theta}(\bs{w}) \in  \argmin_{\b\theta}\cL_\text{ce}(\bs{\theta}, \cS^\text{t}_\text{syn}(\bs{w})) =  \argmin_{\boldsymbol{\theta}}{\frac{1}{N}\sum_{\{\bx,y\}\in \cS^\text{t}_\text{syn}}w(\bx, y)\ell_\text{ce}}(f(\bx, \boldsymbol{\theta}), y)
\label{equ:inner_loop}
\end{align}
}
\end{small}

where $\bs{w}^*$ denotes the optimal sample weights, which is obtained from the outer loop (Eqn.~\ref{equ:outer_loop}); $\hat{\b\theta}(\bw)$ denotes the classifier's parameters after weighted training with $\bw$, which is obtained from the inner loop (Eqn.~\ref{equ:inner_loop});
$\ell_\text{robust}$ denotes the noise-robust loss calculated on validation set;  $\ell_\text{ce}$ denotes the cross-entropy(CE) loss calculated on training set. The whole process is: (1) \textbf{in the inner loop}(Eqn.~\ref{equ:inner_loop}),  we fix $\bw$ and optimize the classifier $f(\bx,\boldsymbol{\theta})$ using the weighted loss $\cL_\text{ce}$ over a synthetic training set $\cS^\text{t}_\text{syn}$, and derive the trained classifier $\hat{\b\theta}(\bw)$;
(2) \textbf{in the outer loop}(Eqn.~\ref{equ:outer_loop}), 
we calculate noise-robust loss $\cL_\text{robust}$ on the synthetic validation set $\cS^\text{v}_\text{syn}$ at the optimized $\hat{\b\theta}(\bw)$. The outer loss is minimized to guide the optimization of $\bs{w}$, and the outer gradient $\nabla_{\bs{w}} \cL_{\text{robust}}$ is calculated via 
truncated back-propagation(Appendix~\ref{sec:back_propagation}).
The two procedures are performed alternatively for $T$ iterations.

Notably, our framework prevents the need for a clean validation set, which is the main obstacle for the previous bi-level re-weighting approaches under zero-shot scenario \citep{ren2018learning, shu2019meta}. The magic of this appealing feature lies in the choice of the outer objective $\cL_{\text{robust}}$ in Eqn. (\ref{equ:outer_loop}). Recall that the goal of sample reweighting is to find $\bw^*$,  with which the model $f(\bx; \bs \theta)$ can be trained using weighted loss over the
synthetic training set, such that the model performs well on the clean data (from the same distribution as test data). 
\begin{wrapfigure}[17]{r}{7cm}
\begin{minipage}{1\linewidth}
\vspace{-5mm}
\begin{algorithm}[H]
\renewcommand\arraystretch{0.8}
\caption{Bilevel Robust Sample Reweighting }\label{alg:full_alg}
\begin{algorithmic}[1]
\REQUIRE a TAM with parameters $\bs{\theta}$, synthetic training dataset $\cS_\text{syn}^\text{t}$ and validation dataset $\cS_\text{syn}^\text{v}$, outer step size $\lambda$, outer iterations T.
\STATE Initialize sample weightings $\bs{w}$ with each $w_i$ to be 0.5.
\FOR{training iteration $t = 1, 2 \ldots T$}
\STATE Conduct weighted training ($\cL_\text{ce}$) on TAM using 
$\bs{w}^{t}, \cS_\text{syn}^\text{t}$
and obtain $\hat{\b\theta}(\bw^{t})$ 
\STATE 
Evaluate $\hat{\b\theta}(\bw^{t})$ on $\cS_\text{syn}^\text{v}$, then obtain meta gradient
$\nabla_{\bs{w}} \cL_\text{robust}$ 
via Eqn.~(\ref{3}).
\STATE Update $\bs{w}$ using $\nabla_{\bs{w}} \cL_\text{robust}$ by gradient descent
$\bs{w}^{t+1} \leftarrow \bs{w}^{t}-\lambda\nabla_{\bs{w}} \cL_\text{robust}$
\ENDFOR
\textbf{Output:} Optimized sample weights $\bs{w}^*$.
\end{algorithmic}
\end{algorithm}
\end{minipage}
\end{wrapfigure}
In Sec.~\ref{sec:theory}, {under the condition that the majority of data are correctly labelled~\citep{ghosh2017robust,wang2019symmetric},} we theoretically show that using $\cL_\text{robust}$ as the outer objective, our method can find a set of sample weights $\bw^*$ with just the synthetic validation set, such that $\bw^*$ maximizes the model performance on the clean data.
The whole procedure is illustrated in Algorithm~\ref{alg:full_alg}.

\paragraph{Noise-robust Loss Functions. \label{para:noise_robust_loss}}
In the noisy label learning literature, there is a family of loss functions that possess the following property:
\begin{small}
{
\setlength\abovedisplayskip{0mm}
\setlength\belowdisplayskip{0mm}
\begin{align}
\sum_{j=1}^{K} \ell_\text{robust}(f(\bx, \boldsymbol{\theta}), j) =C, \forall \boldsymbol{\theta}, \bx,
\label{eqn:symmetric_loss}
\end{align}
}
\end{small}
\hspace{-2mm}where $f(\cdot, \b\theta)$ denotes a classifier, $\bx$ is the input, $K$ is the number of classes, and $C$ is a constant. 
Previous work \citep{wang2019symmetric, zhang2018generalized, ghosh2017robust}
has shown that these loss functions have consistent global minimum under label noise. More formally, 
{when the majority of the training samples are correctly labelled,}
the global minimizer ($\boldsymbol{\theta}^*$) of $\ell_\text{robust}$ is the same regardless of whether the training data is clean or noisy. This noise-robust property enables these losses to optimize the sample weights given only a noisy validation dataset. Detailed proof will be given in Sec.~\ref{sec:theory}.

In particular,  
we consider the reversed cross-entropy loss 
$\ell_\text{rce}$
with the following form for sample 
$(\bs{x},y)$:
{
\begin{small}
\begin{align}
\ell_\text{rce}(f(\bx, \boldsymbol{\theta}), y)=- \sum_{k=1}^K f_k(\bx, \boldsymbol{\theta}) \log q(k|\bx), \nonumber
\end{align}
\end{small}
}

where $f_k(\bx, \boldsymbol{\theta})=\frac{e^{z_k}}{\sum_{i=1}^K e^{z_i}}$ denotes the predicted probability for each label $k \in \{1,...,K\}$ , and  $z_i$ are the logits. We denote the ground-truth distribution over labels by $q(k|\bx)$, and $\sum_{k=1}^K q(k|\bx)=1$. Given the ground-truth label is $y$, then $q(y|\bx)=1$ and $q(k|\bx)=0$ for all $k\neq y$. 
$log(0)$ is approximated to a constant $A$. One can easily check that $\ell_\text{rce}$ satisfies Property (\ref{eqn:symmetric_loss}), and $C=-(K-1)A$ in this case.

\vspace{-0.5em}
\paragraph{Remark.} Even though $\ell_\text{robust}$
is theoretically noise-tolerant, it has been shown that using them to train the network parameters $\boldsymbol{\theta}$ leads to difficulty in optimization and hampers the network's performance \citep{wang2019symmetric, zhang2018generalized}. Remarkably, adopting $\ell_\text{robust}$ as the objective in the outer loop of Eqn.~(\ref{equ:outer_loop}) implicitly overcomes the side effects of $\ell_\text{robust}$. Since $\ell_\text{robust}$ is now used to optimize the sample weights $\bs{w}$, 
which is in a much smaller search space and has simpler structure than $\bs{\theta}$, thus is easier to optimize. This "decouples" noise removal from network training, and thereby prevents hampering the TAM's performance.

\vspace{-0.5em}
\paragraph{Clean Subset Sampling.}
Our framework enables us to derive a set of continuous weights $\boldsymbol{w}$, which encodes the data quality and can well separate the noisy samples from clean ones, as shown in Figure~\ref{fig:reweight_all}.
With those weights, we can 
sample subsets with arbitrary budgets
and 
 use them to train the task models with unweighted CE loss. 
More specifically, suppose the budget is $D$,
we first normalize $w_i$ to $w_i'$ (i.e. $\sum_{i=1}^n w_i'=D$), based on which we then sample a Bernoulli random variable $I_i \sim \mathbf{Ber}(w_i')$ for each sample to indicate whether they should be included. Denote the size of sampled subset as $\hat{D}$. Because $\mathbb{E}_{\boldsymbol{I}\sim p(\boldsymbol{I}|\boldsymbol{w})}\|\boldsymbol{I}\|_0 = \sum_{i=1}^n w_i'=D$, it is clear that $\frac{\hat{D}}{D} \xrightarrow{P} 1 $ when $n \xrightarrow{} \infty$, meaning that $\hat{D}$ is close to $D$.
Notably, because the important and noise-free data are associated with larger weights, training on the sampled subset can perform on par with weighted training over the entire dataset, while being much more efficient.

\section{Theoretical Analysis}\label{sec:theory}
Though we have no clean data as validation set, our method still enjoys favorable theoretical properties. 
Recall that $\cS_\text{clean}$ and $\cS_\text{syn}$ are as the clean and synthetic (noisy) distribution, respectively. We further denote $\cL(\b\theta, \cS) = \bbE_{(\bx, y) \sim \cS} [\ell(f(\bx; \b\theta), y)]$ given a data distribution $\cS$. 
Let the optimal network parameters obtained with CE loss over clean dataset be $\b\theta^*:=\argmin_\theta \cL_\text{ce}(\bs\theta, \cS_\text{clean})$. We assume the $\b\theta^*$ is unique, or we focus on the $\b\theta^*$ with minimum norm. 
\begin{property} 
We call a loss function $\ell$ a robust loss, if {under mild conditions as in \citep{ghosh2017robust},} its minimizer on the noisy dataset collides with the one on the clean dataset, i.e.,
$$
  \argmin_{\boldsymbol{\theta}} \cL_\text{robust}(\b\theta, \cS_\text{clean}) = \argmin_{\b\theta} \cL_\text{robust}(\b\theta, \cS_\text{syn}),
\label{property:noise_tolerance}
$$
\end{property}
Previous work on noise-robust loss functions \citep{ghosh2017robust,wang2019symmetric,zhang2018generalized,xu2019l_dmi} have shown that the losses satisfying Eqn. (\ref{eqn:symmetric_loss}) have the above noise-tolerant property. We give the {detailed assumptions and} complete proof in Appendix~\ref{sec:appendix_property1}.
\begin{ass}
\label{ass:syn_and_clean}
Let $\bbP_{\text{clean}}$ and $\bbP_{\text{syn}}$ denote the probability density function of $\cS_{\text{clean}}$ and $\cS_{\text{syn}}$, respectively. There exists a weighting function $w^*$ such that $\bbP_{clean}(\bx, y) = w^*(\bx, y)\bbP_{syn}(\bx, y).$ 
\end{ass}
Assumption \ref{ass:syn_and_clean} is reasonable because synthetic data generated by the PLM has a wide coverage. Therefore, with a proper weight function $w$, we may recover the clean {distribution} by  reweighting the synthetic data.
\begin{ass}
The optimal $\b\theta^*$ uniquely minimizes $\cL_\text{robust}(\bs{\theta}, \cS_\text{clean})$, i.e., $\cL_\text{robust}(\bs{\theta}^*, \cS_\text{clean}) < \cL_\text{robust}(\bs{\theta}, \cS_\text{clean})$ for all $\bs{\theta} \neq \bs{\theta}^*.$
\label{ass:theta_unique_rob}
\end{ass}
Assumption \ref{ass:theta_unique_rob} is natural since the robust losses are originally designed to train the model. Thus minimizing $\cL_\text{robust}(\b\theta, \cS_\text{clean})$ is expected to achieve promising performance, as justified by \citet{ghosh2017robust}
(though the optimization difficulty mentioned in the Remark of Section \ref{sec:method} poses challenges for model training). We also experimentally show that when training a model with $\cL_\text{ce}(\b\theta, \cS^t_\text{clean})$, $\cL_\text{robust}(\b\theta, \cS^t_\text{clean})$ also decreases, and reaches the plateau at a similar point (Appendix~\ref{app:ass2}). This indicates that $\cL_\text{ce}(\b\theta, \cS_\text{clean})$ and $\cL_\text{robust}(\b\theta, \cS_\text{clean})$ have close optimal solutions. Note that even if $\bs{\theta}$ and $\bs{\theta}^*$ differ by a small quantity, i.e., $\cL_\text{robust}(\bs{\theta}^*, \cS_\text{clean}) < \cL_\text{robust}(\bs{\theta}, \cS_\text{clean}) + \varepsilon$ holds with a small $\varepsilon$, the following proof holds with just minor modifications.

\begin{thm}
\label{thm:identifable_thm}
If Assumption \ref{ass:syn_and_clean} holds, there exists a $\bw^*$ such that 
$\hat{\b\theta}(\bw^*) = \b\theta^*.$
Further with Assumption \ref{ass:theta_unique_rob} and Property \ref{property:noise_tolerance}, our method can uniquely return $\bw^*$ and the resulting $\b\theta^*$ with only synthetic (noisy) data $\cS_\text{noisy}$.
\end{thm}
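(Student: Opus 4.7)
My approach is to split the claim into its two parts and attack each via a change-of-measure argument composed with the noise-tolerance property. For the first part (existence), I will take the weight function furnished by Assumption~\ref{ass:syn_and_clean} as the candidate $\bw^*$ and show that the inner objective becomes exactly the clean-data CE loss; for the second part (uniqueness of the returned minimizer), I will use Property~\ref{property:noise_tolerance} to transport the minimizer of the outer loss from the clean distribution to the synthetic one, and combine with Assumption~\ref{ass:theta_unique_rob} to conclude that $\b\theta^\ast$ is the unique fixed point of the whole bi-level scheme.

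\textbf{Step 1 (existence).} Define $\bw^\ast := w^\ast$ where $w^\ast$ is the density ratio guaranteed by Assumption~\ref{ass:syn_and_clean}. The key identity is the importance-sampling rewrite
\begin{equation*}
\cL_\text{ce}(\b\theta, \cS_\text{syn}(\bw^\ast))
 = \int w^\ast(\bx,y)\,\ell_\text{ce}(f(\bx;\b\theta),y)\,\bbP_\text{syn}(\bx,y)\,d\bx\,dy
 = \int \ell_\text{ce}(f(\bx;\b\theta),y)\,\bbP_\text{clean}(\bx,y)\,d\bx\,dy
 = \cL_\text{ce}(\b\theta, \cS_\text{clean}).
\end{equation*}
Minimizing both sides in $\b\theta$ immediately gives $\hat{\b\theta}(\bw^\ast) = \argmin_{\b\theta} \cL_\text{ce}(\b\theta, \cS_\text{clean}) = \b\theta^\ast$ by definition of $\b\theta^\ast$.

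\textbf{Step 2 (uniqueness through robust loss).} For the outer loop, observe that by Property~\ref{property:noise_tolerance} we have $\argmin_{\b\theta}\cL_\text{robust}(\b\theta, \cS_\text{syn}) = \argmin_{\b\theta}\cL_\text{robust}(\b\theta, \cS_\text{clean})$, and by Assumption~\ref{ass:theta_unique_rob} this common minimizer is the singleton $\{\b\theta^\ast\}$. Therefore, for \emph{any} weight vector $\bw$, $\cL_\text{robust}(\hat{\b\theta}(\bw), \cS^v_\text{syn}) \geq \cL_\text{robust}(\b\theta^\ast, \cS^v_\text{syn})$ with equality iff $\hat{\b\theta}(\bw) = \b\theta^\ast$. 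Combining with Step 1, $\bw^\ast$ attains this lower bound, so $\bw^\ast \in \argmin_{\bw}\cL_\text{robust}(\hat{\b\theta}(\bw), \cS^v_\text{syn})$, and every minimizer $\bw$ of the outer problem must satisfy $\hat{\b\theta}(\bw) = \b\theta^\ast$. Hence although the map $\bw \mapsto \hat{\b\theta}(\bw)$ may not be injective, the resulting classifier parameter returned by the algorithm is uniquely $\b\theta^\ast$, which is the content of the claim.

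\textbf{Main obstacles.} The bookkeeping is light, but two subtleties need care. First, Assumption~\ref{ass:syn_and_clean} as stated requires $\bbP_\text{clean}$ to be absolutely continuous with respect to $\bbP_\text{syn}$; I will flag that this is implicit and that $w^\ast$ takes values in a bounded range so that the reweighted loss is well-defined (this is consistent with the paper's restriction $w \in [0,1]$ up to a global rescaling of the outer step). Second, the statement is framed at the population level, while the algorithm uses finite samples $\cS^t_\text{syn}, \cS^v_\text{syn}$; the cleanest presentation is to first establish everything in the population regime, and then note that standard uniform-convergence arguments close the gap for large $N,M$, which is the step I would expect to require the most technical care if one wanted to make the guarantee fully non-asymptotic.
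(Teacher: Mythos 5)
Your proposal is correct and follows essentially the same route as the paper's own proof: existence via the change-of-measure identity $w^*\,\bbP_\text{syn}=\bbP_\text{clean}$ applied to the inner (weighted CE) objective, and uniqueness by combining Property~\ref{property:noise_tolerance} with Assumption~\ref{ass:theta_unique_rob} to identify $\b\theta^*$ as the sole minimizer of $\cL_\text{robust}(\cdot,\cS_\text{syn})$. Your added remark that uniqueness is really at the level of $\hat{\b\theta}(\bw)=\b\theta^*$ (since $\bw\mapsto\hat{\b\theta}(\bw)$ need not be injective) is a slightly more careful reading of the same argument, not a different one.
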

Given Assumption \ref{ass:theta_unique_rob} holds, $\cL_\text{ce}(\bs\theta, \cS_\text{clean})$ and $\cL_\text{robust}(\bs\theta, \cS_\text{clean})$ have consistent optimal solution $\b\theta^*$. Furthermore, Property \ref{property:noise_tolerance} of $\cL_\text{robust}$ indicates that $\b\theta^*$ can be found with just the noisy synthetic data $\cS_\text{syn}$. Therefore, we can optimize $\cL_\text{robust}(\bs\theta, \cS_\text{syn})$ to find $\b\theta^*$. Finally, since $\b\theta^*$ can be parameterized by $\bw^*$ (Theorem \ref{thm:identifable_thm}), which is achieved by the inner loop (Eqn. (\ref{equ:inner_loop})), we can optimize $\cL_\text{robust}(\hat{\b\theta}(\bw), \cS_\text{syn})$ over $\bw$ in the outer loop (Eqn. (\ref{equ:outer_loop})) to find the optimal corresponding $\bw^*$ as well as the $\b\theta^*$. See Appendix \ref{app:proof_of_thm_infinite_samples} for detailed proof.

The above analysis theoretically shows our \ourmodel is able to learn the optimal sample weights $\bw^*$ and the resulting optimal model parameters $\b\theta^*$ with just synthetic data. Notably, Theorem \ref{thm:identifable_thm} is based on population losses with infinite samples. We further characterize the generalization bound when there is finite samples:

\begin{thm}[Finite Sample Generalization]
\label{thm:finite_samples}
Suppose we have access to synthetic datasets 
$ \cS^v_{\text{syn}}$ and $ \cS^t_{\text{syn}}$ both with finite samples $N$. Let $\boldsymbol {\hat  \theta}^*(\bw)$ be the deterministic mapping from $\bw$ to $\bs \theta$ defined in the inner of Eqn.\ref{equ:inner_loop} given $\hat \cS^t_\text{syn}$. Assuming the output of loss function $\ell_\text{robust}$ is upper bounded by $M$, the carnality of $\cW$ is $|\cW|$, the outer loop of Eqn.\ref{equ:outer_loop} is solved within $\epsilon$-approximately, and the solution $\hat \bw$ satisfies the following condition:
\begin{align*}
    \cL_\text{robust}(\bs{\hat \theta}^*(\hat \bw); \cS^v_{\text{syn}}) \leq \min_{\bw} \cL_\text{robust}(\bs{\hat \theta}^*(\bw); \cS^v_{\text{syn}}) + \epsilon,
\end{align*}
we then have with probability at least $1-\delta$,
\begin{align}
    \label{eqn:generalization}
    \cL_\text{robust}(\bs{\hat \theta}^*(\hat \bw), \cS_{\text{syn}}) \leq &  \min_{\bw} \cL_\text{robust}(\bs{\hat \theta}^*(\bw);  \cS_{\text{syn}})
    + \epsilon +  \kappa\sqrt{\frac{2\ln(|\cW|/\delta)}{M}}
\end{align}
\end{thm}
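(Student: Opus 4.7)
The plan is to prove Theorem~\ref{thm:finite_samples} via a standard empirical-process argument, since the hypothesis class over $\bw$ is discrete with cardinality $|\cW|$. The key object to control is the uniform deviation
\begin{align*}
\Delta := \sup_{\bw \in \cW} \left| \cL_\text{robust}(\bs{\hat\theta}^*(\bw); \cS^v_{\text{syn}}) - \cL_\text{robust}(\bs{\hat\theta}^*(\bw); \cS_{\text{syn}}) \right|,
\end{align*}
which measures how well the validation loss estimates the population loss, uniformly over the candidate weightings.

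First, I would fix the training set $\cS^t_\text{syn}$. This makes the mapping $\bw \mapsto \bs{\hat\theta}^*(\bw)$ purely deterministic, so that for any single $\bw \in \cW$ the quantity $\cL_\text{robust}(\bs{\hat\theta}^*(\bw); \cS^v_{\text{syn}})$ is an empirical average of $M$ i.i.d.\ random variables (drawn from $\cS_\text{syn}$ independently of $\cS^t_\text{syn}$), each taking values in $[0,\kappa]$ by the boundedness assumption on $\ell_\text{robust}$. Hoeffding's inequality then yields, for any $t>0$,
\begin{align*}
\Pr\!\left( \left|\cL_\text{robust}(\bs{\hat\theta}^*(\bw); \cS^v_{\text{syn}}) - \cL_\text{robust}(\bs{\hat\theta}^*(\bw); \cS_{\text{syn}})\right| > t \right) \leq 2\exp\!\left(-\tfrac{2Mt^2}{\kappa^2}\right).
\end{align*}

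Second, I would apply a union bound over the $|\cW|$ elements of $\cW$ to upgrade this pointwise concentration to uniform concentration, and invert the resulting tail bound by setting the right-hand side equal to $\delta$, obtaining with probability at least $1-\delta$
\begin{align*}
\Delta \leq \kappa\sqrt{\tfrac{2\ln(|\cW|/\delta)}{M}}.
\end{align*}
(Any stray constants from the factor of 2 can be absorbed into $\kappa$, as the theorem is stated up to such constants.)

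Finally, I would chain the standard three-term decomposition against this uniform bound. Let $\bw^\star \in \argmin_{\bw} \cL_\text{robust}(\bs{\hat\theta}^*(\bw); \cS_\text{syn})$ be a population minimizer. Then
\begin{align*}
\cL_\text{robust}(\bs{\hat\theta}^*(\hat\bw); \cS_\text{syn})
&\leq \cL_\text{robust}(\bs{\hat\theta}^*(\hat\bw); \cS^v_\text{syn}) + \Delta \\
&\leq \cL_\text{robust}(\bs{\hat\theta}^*(\bw^\star); \cS^v_\text{syn}) + \epsilon + \Delta \\
&\leq \cL_\text{robust}(\bs{\hat\theta}^*(\bw^\star); \cS_\text{syn}) + \epsilon + 2\Delta,
\end{align*}
where the first and third lines use the uniform-convergence bound and the middle line uses the $\epsilon$-approximate optimality of $\hat\bw$. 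Absorbing the factor of $2$ into $\kappa$ recovers the stated inequality~(\ref{eqn:generalization}). I expect the main subtlety, rather than any hard calculation, to be keeping the independence structure clean: one must argue explicitly that $\cS^v_\text{syn}$ is independent of $\bs{\hat\theta}^*(\bw)$ for every $\bw$ (since $\bs{\hat\theta}^*(\bw)$ is a deterministic function of $\cS^t_\text{syn}$ alone), so that Hoeffding applies simultaneously to all $|\cW|$ losses without any additional data-dependent complexity term; the finiteness of $\cW$ is what makes the plain union bound, rather than Rademacher/covering arguments, sufficient here.
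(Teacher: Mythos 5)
Your proposal is correct and follows essentially the same route as the paper: concentration of the $M$-sample validation average around its population value for each fixed $\bw$ (you via Hoeffding, the paper via the bounded-differences inequality, which is the same thing for an average of bounded i.i.d.\ terms), a union bound over the finite class $\cW$, and the standard three-term decomposition using the $\epsilon$-approximate optimality of $\hat\bw$. The only cosmetic difference is that the paper applies the one-sided bound to the fixed comparator with just $\ln(1/\delta)$ and then uses $|\cW|>1$ to merge the two deviation terms into $\kappa\sqrt{2\ln(|\cW|/\delta)/M}$, whereas your symmetric use of a single uniform deviation $\Delta$ leaves a factor of $2$ that cannot strictly be ``absorbed into $\kappa$'' (since $\kappa$ is the loss bound), though this only affects the constant.
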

Refer to Appendix \ref{app:proof_of_thm:finite_samples} for full proof. Theorem \ref{thm:finite_samples} characterizes the generalization ability of \ourmodel. If we obtain an approximate solution of the bilevel problem Eqn.(\ref{equ:outer_loop})-(\ref{equ:inner_loop}) given finite samples, 
Eqn. (\ref{eqn:generalization}) shows that  such solution has a test performance close to the oracle model. 

\section{Experiments}
\subsection{Setup} \label{sec:exp_setup}
\textbf{Datasets \& Baselines.} We evaluate \ourmodel on eight text classification tasks, including IMDb~\citep{DBLP:conf/acl/MaasDPHNP11}, SST-2~\citep{DBLP:conf/emnlp/SocherPWCMNP13}, Rotten Tomatoes~\citep{pang-lee-2005-seeing}, Amazon~\citep{McAuley2013},  Yelp~\citep{zhang_yelp}, Subj~\citep{pang-lee-subj}, AGNews~\citep{zhang_yelp} and DBpedia~\citep{zhang_yelp}. These tasks have various number of classes, ranging from 2 to 14. 
Other details about datasets are in Appendix~\ref{app:implementation}.
We compare our proposed method with the following baselines:
(1) \textbf{\textsc{Prompting}}. The prompt-based zero-shot classification method based on PLMs \citep{DBLP:conf/nips/BrownMRSKDNSSAA20,gao-etal-2021-making}. 
(2) \textbf{\textsc{ZeroGen}}. A recent zero-shot learning work via dataset generation \citep{ye2022zerogen}.

\textbf{Implementation Details.} 
We compare the baselines using GPT2-XL~\citep{radfordlanguage} as PLM. For text generation, we use Nucleus Sampling \citep{DBLP:conf/iclr/HoltzmanBDFC20} with $p=0.9$ as the decoding strategy and use GPT2-XL as the generator. To make a fair comparison, we use the best prompts designed by \citep{ye2022zerogen} in both \textsc{Prompting} and data-generation settings (Appendix 
 Table~\ref{tab:promptdesign_sst} ). For task model training, we use 1-layer Bi-LSTM and DistilBERT-base as the lightweight classifiers. The bilevel procedure is iterated 50 times for each task. For more details(e.g., full prompts, training details), please refer to Appendix~\ref{app:implementation}.

\subsection{Experimental Results}\label{sec:main_experiment}
\paragraph{Main Experiments.} We present our main experiment results in Table~\ref{tab:main}.  
We observe that our \ourmodel achieves considerable performance gain over the \zerogen baseline across all the tasks. Interestingly, the improvement is more prominent for LSTM compared with DistilBERT, which shows relative improvement over \zerogen by 9.8\% on average across all tasks. We conjecture the reason is that the pre-trained models such as DistilBERT are inherently more robust to noisy training data, which is also pointed out in \citep{hendrycks2019using}. Surprisingly, on Rotten Tomatoes and Yelp, \ourmodel-LSTM even outperforms \zerogen-DistilBERT, which requires no pre-training and has much fewer parameters.

\begin{table*}[t]
\renewcommand\arraystretch{1.1} 
\caption{Evaluation results for \ourmodel framework on two different scales of TAM. The scale of synthetic dataset is 200k for both \textsc{ZeroGen} and \ourmodel. The scales of labeled data in supervised setting are listed under task names. \textit{\textbf{``Gold Data''}} refers to the standard dataset with human annotations.
}
\centering
\scalebox{0.73}{
\begin{tabular}{lll|ccccccccc}
\toprule
 \textbf{TAM} & {\textbf{\#Param}} & {\textbf{Setting}} &\textbf{IMDb} & \textbf{SST-2} & \textbf{Rotten} & \textbf{Amazon} & \textbf{ Yelp } & \textbf{Subj} & \textbf{AGNews} & \textbf{DBpedia} & \textbf{ Avg }\\
 \hline
\textbf{\textit{\#Gold Data}} & & & \textit{25k} & \textit{6.9k} & \textit{8.3k} & \textit{25k} & \textit{560k} & \textit{8k} & \textit{120k} & \textit{560k} &-\\
 DistilBERT & 66M &\multirow{2}{*}{{\textsc{Supervised}}} & 87.24 & 89.68 & 83.67 & 92.63 & 95.42 & 95.95 & 94.51 & 99.14 & 92.28 \\
LSTM & $\sim$7M &   & 84.60 & 76.30 & 77.49 & 86.36 &91.30 & 90.20  & 90.61 & 98.28  & 86.89  \\
\hline
\hline
- & 1.5B &  \textit{\textsc{Prompting}} &   80.64 &89.22 & 81.89 & 83.63 & 82.72& 68.00 & 68.81 & 67.88 & 77.85 \\
\hline
 \multirow{2}{*}{DistilBERT} & \multirow{2}{*}{66M} &  \textit{\textsc{ZeroGen}} &   84.28 &  87.27 & 83.02 & 87.19 & 87.58 & 80.45 & 76.48 & 79.32 & 83.20   \\
 &  &\textit{\ourmodel} & \bf{89.38}  & \bf{89.45} & \bf{84.52} & \bf{89.01} & \bf{89.19} & \textbf{83.25} & \textbf{80.49} & \textbf{82.67} & \textbf{86.00}\\
 \hline
 \multirow{2}{*}{LSTM} & \multirow{2}{*}{$\sim$7M} &  \textit{\textsc{ZeroGen}} & 71.52 & 74.89 & 73.45 & 80.48 & 84.95 & 69.15 & 73.34 & 67.02 & 74.35  \\
 & &  \textit{\ourmodel} & \bf{84.10} & \bf{84.58} & \bf{83.21} & \bf{84.22} & \bf{89.06} & \textbf{75.85} & \textbf{79.75} & \textbf{72.07} & \textbf{81.61}  \\
\bottomrule
\end{tabular}}
\vspace{-5mm}
\label{tab:main}
\end{table*}

\paragraph{Effectiveness of \ourmodel on Constructing Noise-Free Data.}\label{sec:observation_noise}
Figure~\ref{fig:overfit} shows that while ZeroGen suffers from overfitting on noisy data, the problem disappears when training with the subset selected by \ourmodel. With longer training epochs, the data of \ourmodel consistently helps improve the performance of TAM on test set and significantly surpasses the result of ZeroGen, which proves that \ourmodel can effectively construct noise-free high-quality data.

\begin{table}[t]
\begin{minipage}[c]{0.48\textwidth}
\caption{Evaluation results using different validation sets ($\cS^\text{v}$) in the outer loop.
LSTM is used as TAM.}
\centering
\scalebox{0.75}{
\begin{tabular}{cc|ccccc}
\toprule
\textbf{Method} & \textbf{$\cS^\text{v}$} &{\textbf{IMDb}}   & {\textbf{Amazon}} & {\textbf{ Yelp }} & \textbf{Rotten} \\
\hline
\textsc{Supervised} & - & 84.60 & 86.36 & 91.30 & 77.49 \\
\textsc{ZeroGen} & - & 71.52  & 80.48 & 84.95 & 73.45 \\
\hline
\multirow{2}{*}{\ourmodel} & Gold & 82.34  & \textbf{84.71} & 88.83 & 80.05 \\
 & Syn. & \textbf{84.10} & 84.22 & \textbf{89.06} & \textbf{83.21} \\
\bottomrule
\end{tabular}}
\vspace{-5mm}
\label{tab:outer}
\end{minipage}
\hspace{3mm}
\begin{minipage}[c]{0.48\textwidth}
\caption{Experimental comparison with other de-noise methods using LSTM as TAM.}
\centering
\scalebox{0.8}{
\begin{tabular}{c|ccccc}
\toprule
\textbf{Method} &{\textbf{IMDb}}   & {\textbf{Amazon}} & {\textbf{ Yelp }} \\
\hline
Confidence & 79.97 & 70.44 & 76.91 \\
Co-Teaching & 72.64 & 73.53& 75.50 \\
Meta-Weight-Net & 71.23 & 79.25 & 83.41 \\
\ourmodel & \textbf{84.10} & \textbf{84.22} & \textbf{89.06}\\
\bottomrule
\end{tabular}}
\vspace{-5mm}
\label{tab:denoise}
\end{minipage}
\end{table}

\paragraph{Synthetic Data vs. Gold Data Supervision.} 
A key advantage of our proposed framework is that we do not require gold data to optimize sample weights. Here, we compare our \ourmodel to the bilevel reweighting method which uses gold data for calculating the outer objective in Table~\ref{tab:outer}. Specifically, our \ourmodel calculates the robust outer objective $\ell_\text{rce}$ on the noisy synthetic data, while the counterpart calculates the normal outer objective $\ell_\text{ce}$ on gold data. Table~\ref{tab:outer} shows that our method achieves similar performances as the counterpart, which verifies that our method using the noise validation set can equivalently supervise the optimization of sample weights as the clean validation set does.

\paragraph{\ourmodel vs. Popular Denoise Baselines.}
We compare with other de-noise methods, which are (1) directly removing data with low predicted confidence~\citep{swayamdipta2020dataset}, (2) Co-Teaching~\citep{han2018co}, which distinguishes noisy data by large loss value, and (3) Meta-Weight-Net~\citep{shu2019meta}, which relies on meta samples and a loss-based proxy model to learn the sample weights. 
Since in zero-shot setting we have no access to gold data, we use the synthetic data as validation set for Co-Teaching and Meta-Weight-Net. Results in Table~\ref{tab:denoise} prove our method's superiority in zero-shot setting. 
We further analyze why the loss-value-based methods
are not able to distinguish noisy data in our situation in Appendix~\ref{app:loss-based-denoise}.

\vspace{-0.5em}
\subsection{Ablation Study and Analysis}
\vspace{-0.5em}
\paragraph{Performance of Different Data Sizes.} 
Table~\ref{tab:diffsize} shows that even with much less data, model trained by \ourmodel achieves better performance than \textsc{ZeroGen}.
For example, in IMDb, by removing low-quality data, subset with 2\% data achieves better performance than the original data (20k vs. 1,000k), which shows superior data quality of \ourmodel.
Besides, from Figure~\ref{fig:reweight_all}(d), we find the percentage of erroneous data is small. However, those erroneous data significantly degrades the model performance: without weighted training to remove the effect of noisy data, the IMDB accuracy of full set is decreased from 86.56 to 78.29.

\begin{table}
\begin{minipage}[c]{0.48\textwidth}
\caption{Results of \ourmodel-LSTM on different data sizes.
Given subsets of \ourmodel, models are trained with $\ell_\text{ce}$. For the 1,000k set of \ourmodel, the model is trained using weighted $\ell_\text{ce}$. 
Subsets that surpass the original full set (1,000k) are marked by \textbf{bold}. Performance of original full set is marked by \underline{underline}.
}
\begin{center}
\scalebox{0.8}{
\begin{tabular}{c|cccccc}
\toprule
 \multirow{2}{*}{\textbf{Size}}&\multicolumn{2}{c}{\textbf{IMDb}}  & \multicolumn{2}{c}{\textbf{Amazon}} & \multicolumn{2}{c}{\textbf{ Yelp }}  \\
  & \textit{\textsc{Zero}} &  \textit{\textsc{Sun}} & \textit{\textsc{Zero}} &  \textit{\textsc{Sun}} & \textit{\textsc{Zero}} &  \textit{\textsc{Sun}}  \\
\hline
1,000k & \underline{78.29} & 86.56 &  \underline{82.47} & 84.63 & \underline{86.28} & 90.38\\
\hline
10k &  62.40 & \cc{72.05} & 74.46 & \cc{75.84} & 75.22 & \cc{80.67}  \\
20k &  65.12 & \textbf{79.96} & 75.78 & \cc{77.52} & 79.55 & \cc{82.88} \\
50k&  68.12 & \textbf{81.14} & 78.14 & \cc{81.97} & 80.81 & \cc{85.82}  \\
100k & 71.28 & \textbf{82.09} & 80.25 & \textbf{83.68} & 82.97 & \textbf{88.41} \\
200k & 71.52 & \textbf{84.10} & 80.48 & \textbf{84.22} & 84.95 & \textbf{89.06} \\
\bottomrule
\end{tabular}}
\vspace{-2mm}
\end{center}
\label{tab:diffsize}
\end{minipage}
\hspace{1mm}
\begin{minipage}[c]{0.48\textwidth}
\caption{Diversity and Correctness.  We measure the \textit{diversity} by Self-BLEU4 and \textit{correctness} by accuracy of an oracle model (finetuned RoBERTa-Large).
{Lower Self-BLEU4 score indicates higher diversity.} ``\ourmodel-Top'' and ``\ourmodel-Bottom'' represent 10k samples with highest and lowest weights respectively.
}
\centering
\scalebox{0.7}{
\begin{tabular}{l|ccc}
\toprule
\textbf{Method} & \textbf{IMDb} & \textbf{Amazon} & \textbf{Yelp} \\
\hline
\multicolumn{4}{c}{\textbf{\textit{Diversity} $\downarrow$}} \\
Gold & 0.30 & 0.29 & 0.29 \\
\hline
\zerogen & 0.15 & 0.12 & 0.14 \\
\ourmodel & \textbf{0.14} & \textbf{0.10} & \textbf{0.11} \\
\bottomrule
\toprule
\multicolumn{4}{c}{\textbf{\textit{Correctness(\%)} $\uparrow$ }} \\
Gold & 96.22 & 96.60 & 98.35 \\
\hline
\zerogen & 75.86 & \textbf{93.58} & \textbf{94.47} \\
\ourmodel & \textbf{82.27} & 88.87 & 90.78\\
\hdashline
\ourmodel-Top & 86.00 & 84.20 & 85.33 \\
\ourmodel-Bottom& 4.57 & 61.50 & 44.66\\
\bottomrule
\end{tabular}
}
\vspace{-2mm}
\label{tab:quality}
\end{minipage}
\end{table}

\begin{figure*}[b]
\vspace{-2mm}
\centering
\includegraphics[width=1\textwidth]{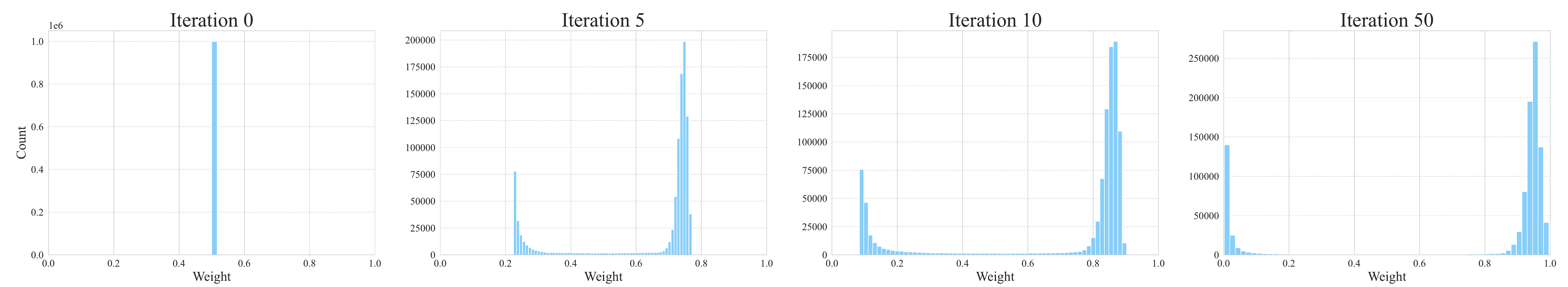}
\vspace{-5mm}
\caption{Histogram of learnt weights in IMDb synthetic dataset (1,000k).
The weights are gradually separated as optimization proceeds, indicating \ourmodel can differentiate high-quality data from erroneous ones.}
\label{fig:reweight_all}
\end{figure*}

\paragraph{Analysis of Data Diversity and Correctness.}
Table~\ref{tab:quality} shows that our selected noise-free dataset is more diverse than data generated by \textsc{ZeroGen}. 
One interesting thing to note is that in Amazon and Yelp, average correctness of \ourmodel is slightly lower than \textsc{ZeroGen}. In addition,
the top samples 
(with highest weights)
have slightly lower correctness than average. This is expected as the data with highest correctness may be redundant or too simple, while the challenging and informative samples are often harder to classify and thus have lower correctness. The results further verify that \ourmodel effectively separates the informative samples from the ones that are redundant or erroneous. This can not be done with the heuristic methods that manually sets a threshold 
to 
separate clean and noisy data,
which may keep redundant samples and remove hard ones.

\paragraph{Analysis of Removed and Selected Examples.}  
We take IMDb (sentiment classification task of movie review) as the example task and find that
the majority of samples associated with small weights are wrongly labeled, as shown in Table~\ref{tab:visulization}.
Besides, there is a small part of erroneous data which contains task-irrelevant text, meaning the generated text is not a movie review and has no obvious emotional tendency.
For samples with large weights, we actually find them to be well-written transitional complex sentences (bottom part of Table~\ref{tab:visulization}), which verifies that \ourmodel tends to select correct and important samples.

\begin{table*}[t]
\caption{Examples of removed data and selected data in IMDb synthetic dataset.}
\centering
\renewcommand\arraystretch{1.2}  
\scalebox{0.85}{
\begin{tabular}{llc}
\toprule
\textbf{Text}<X> & \textbf{Label}<Y> & \textbf{Noisy Type} \\
\hline
\multicolumn{3}{l}{\textbf{\textit{Removed Data (Data with Small  Weights)}}} \\
\hline
\makecell[l]{{T}he film does a great job of capturing the fear and battle that so many U.S. troops have \\experienced during the seven-year war in Afghanistan.} & Neg. & Noisy Y \\
\makecell[l]{{T}his long, pompous, chain-smoking movie makes a big hit out of not very much at all.} & Pos. & Noisy Y\\
\makecell[l]{{O}ne of the worst cult films ever made. 2D CGI animations of zombies and comic-book \\characters. Some bad acting, technical problems, cheap gimmicks and script that is.} & Pos. & Noisy Y \\
\makecell[l]{The four-hour look at family dysfunction brings forth a richer character study.}  & Neg & Unrelated X \\
\makecell[l]{A fresh and visceral portrait of a man trying to make sense of his life.}  & Neg & Unrelated X \\
\hline
\hline
\multicolumn{3}{l}{\textbf{\textit{Selected Data (Data with Large Weights)}}} \\
\hline
\makecell[l]{Despite its oddball structure and topsy-turvy interactions between characters, this \\surprisingly zany animated film succeeds where so many animated films fail.} & Pos. & No Noise \\
\makecell[l]{{N}ot worth the time for the main actors, but for the almost movie has a very good story\\ that puts many sci-fi movies of the past to shame.} & Pos. &   No Noise \\
\makecell[l]{{A}n outrageously stupid movie that has been thoroughly disproved by fact and logic.} & Neg. &   No Noise\\
\makecell[l]{Wonder Woman is a big-budget superhero blockbuster that turns the spotlight on the\\ potential of a woman leader…but the movie is ultimately unfulfilling and laden with\\ female stereotypes.} & Neg. &  No Noise\\
\makecell[l]{While a satire on class and power, the film's dismissal of human misery is shallow and\\ the actors' portrayal of the weak and needy are gratingly self-pitying.} & Neg. &   No Noise\\
\bottomrule
\end{tabular}}
\label{tab:visulization}
\end{table*}
\vspace{-5mm}

\section{Related Work}
\paragraph{Zero-shot Learning via PLM.}
The popular prompt-based zero-shot prediction is proposed by GPT~\citep{radfordlanguage}.
With well-designed prompts, large-scale PLMs have shown its notable zero-shot learning ability in various tasks~\citep{DBLP:journals/corr/abs-2012-00955,DBLP:conf/emnlp/ShinRLWS20,DBLP:conf/chi/ReynoldsM21}.
More recently, data-generation-based work via PLM has gained popularity and shown superior ability on synthesizing task-specific data ~\citep{DBLP:conf/aaai/Anaby-TavorCGKK20,DBLP:conf/emnlp/PuriSSPC20,DBLP:journals/corr/abs-2003-02245,DBLP:journals/corr/abs-2102-01335,zero_label,yoo-etal-2021-gpt3mix-leveraging,unspervised_retrival}. 
Apart from work that still relies on task-related human-annotated data to instruct or fine-tune the generative PLM, a recent line of research explores this direction in zero-shot scenario:
\citet{SchickS21a,meng2022generating} use PLM with task-dependent prompts to generate data, and finetune another PLM on such data for task prediction. To further investigate PLM's zero-shot ability
and alleviate the computation cost of PLM, 
\citet{ye2022zerogen} study an extreme scenario, which trains a tiny model from scratch using the synthetic data.

\paragraph{Noise Robust Learning.}
Previous methods of tackling data noise can be categorized into two groups:
(1) heuristic approaches based on loss values that rely on the assumption that the network learns easy samples first, which adopt either resampling \citep{han2018co, jiang2018mentornet, yu2019does}, loss reweighting \citep{thulasidasan2019combating, konstantinov2019robust, ren2018learning, shu2019meta}, or label correction \citep{ma2018dimensionality,kremer2018robust,reed2014training}. These methods require either manually set a threshold for the loss value or a clean validation set, which makes their performance questionable in zero-shot scenario. (2) methods in another line train the network with noise-robust loss \citep{ghosh2017robust,ma2020normalized,liu2020peer,xu2019l_dmi,wang2019symmetric}. Despite they learn a robust classifier in theory, they are typically difficult to train the DNNs and result require more hyper-parameter tuning \citep{zhang2018generalized, wang2019symmetric}. 
To this end, we take advantages from both lines of research and design an end-to-end framework which can reliably filter out harmful data without requiring a clean validation set.
\vspace{-0.2cm}

\section{Conclusion}
This paper focuses on high-quality data generation in efficient zero-shot learning.
{To address the noise in data,}
we design an end-to-end framework to construct a clean synthetic dataset without relying on any human-labeled data or human intervention. Our method can be jointly applied to other data generation pipelines to automatically select high-quality data.
We hope this paper can provide insights for improving the quality of synthetic datasets, and inspire more exploration in data-generation-based zero-shot learning via PLM.



\bibliography{custom}
\bibliographystyle{iclr2023_conference}

\appendix
\section*{Appendix}
\appendix

\section{Proofs}\label{sec:proofs_app}
\subsection{Empirical Verification for Assumption~\ref{ass:theta_unique_rob} and~\ref{ass:theta_unique_rob_relax}}\label{app:ass2}
{\paragraph{Summary and Discussion.} In the sections below, we first show that {(1) the poor performance of training the neural network with robust losses is due to the optimization difficulty, rather than the lack of good solution}. We verify this by showing that the loss value of RCE is able to steadily decrease when the network is trained with CE loss. However, when training the network with RCE, the loss value fails to decrease. Then, we show that {(2) the optimal solutions of CE and RCE losses are close} by plotting the loss surface for both losses around the optimal solution obtained with CE loss. We are able to observe that the RCE loss is also close to the minimum around the optimal solution of CE loss. We believe the above two experiments can verify that the Assumption~\ref{ass:theta_unique_rob} and \ref{ass:theta_unique_rob_relax} are reasonable. } 
\begin{figure*}[h]
\subfigure[3D visualization from the first angel.]{
\begin{minipage}[t]{0.48\textwidth}
\includegraphics[width=0.45\textwidth]{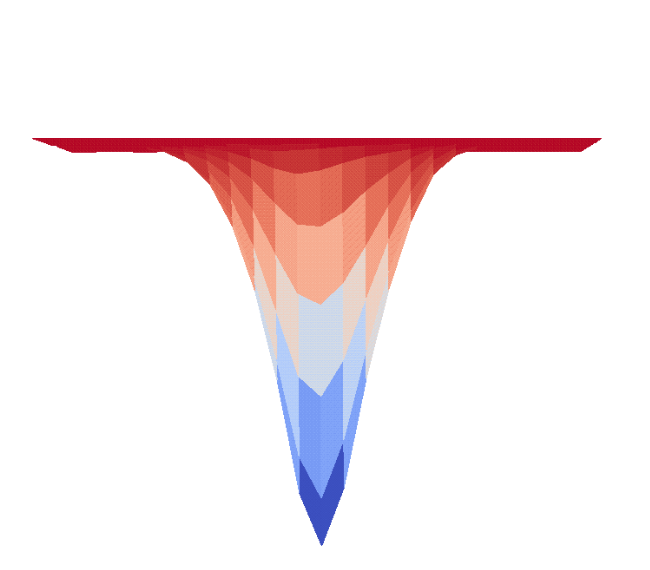}
\hspace{3mm}
\includegraphics[width=0.45\textwidth]{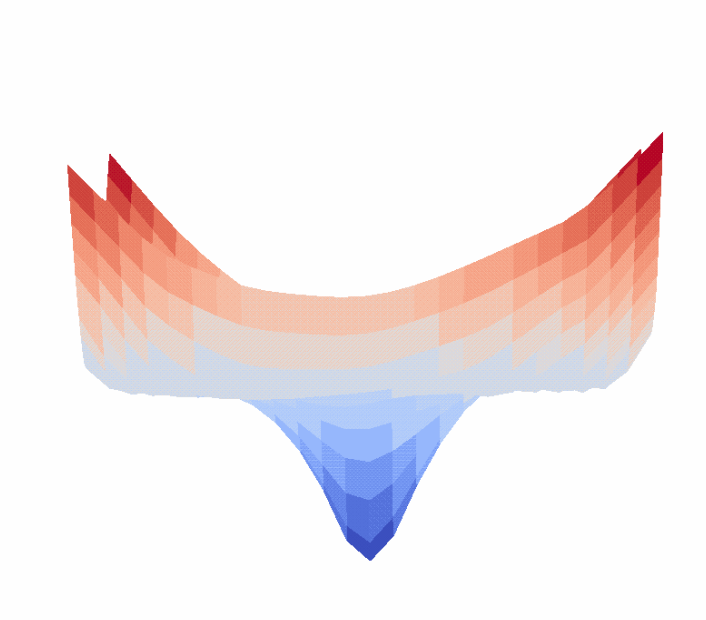}
\end{minipage}
}
\subfigure[3D visualization from the second angel.]{
\begin{minipage}[t]{0.48\textwidth}
\includegraphics[width=1\textwidth]{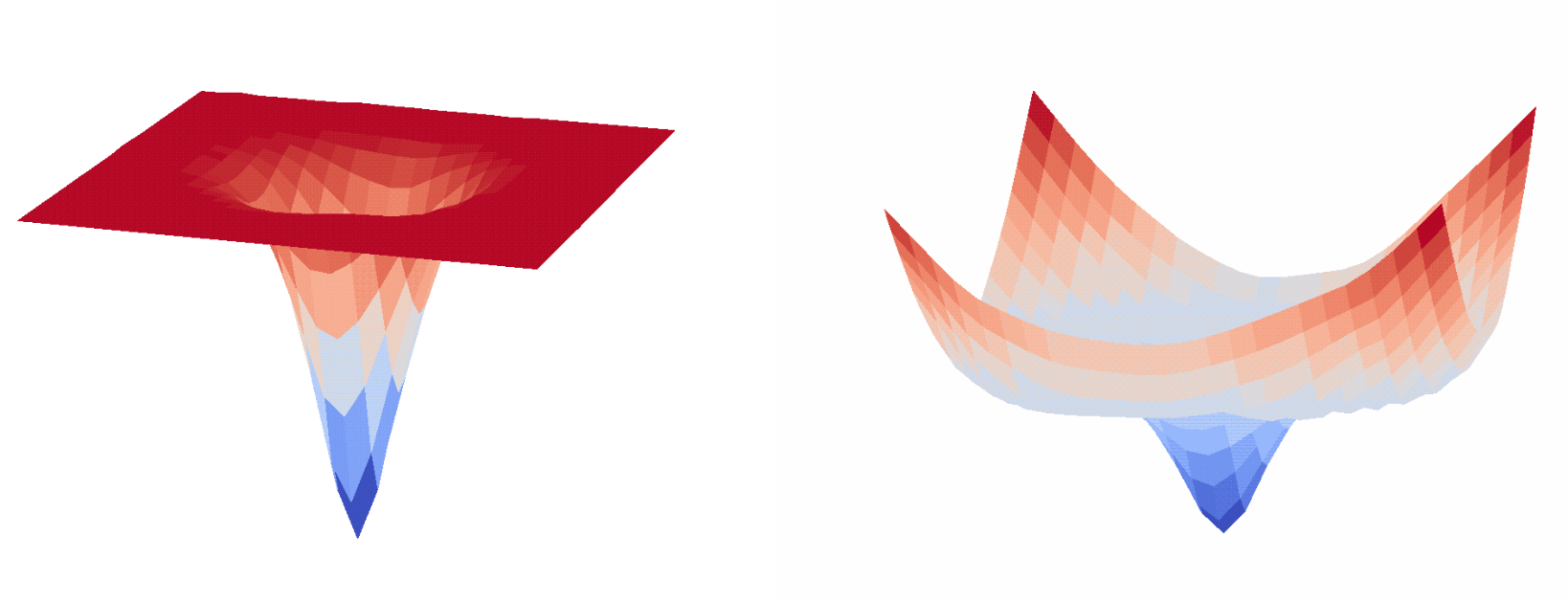}
\end{minipage}
}
\subfigure[3D visualization from the third angel.]{
\begin{minipage}[t]{0.48\textwidth}
\includegraphics[width=1\textwidth]{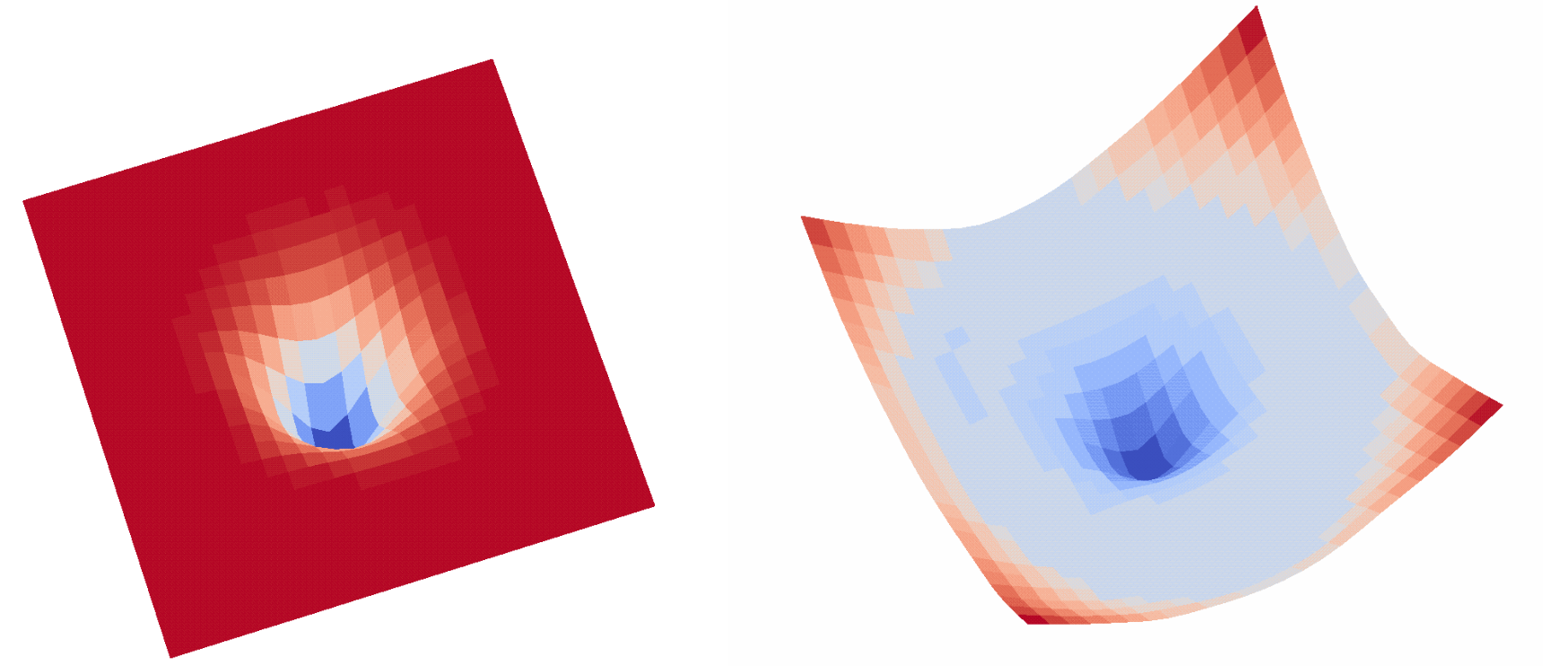}
\end{minipage}
}
\subfigure[2D visualization.]{
\begin{minipage}[t]{0.48\textwidth}
\centering
\includegraphics[width=0.9\textwidth]{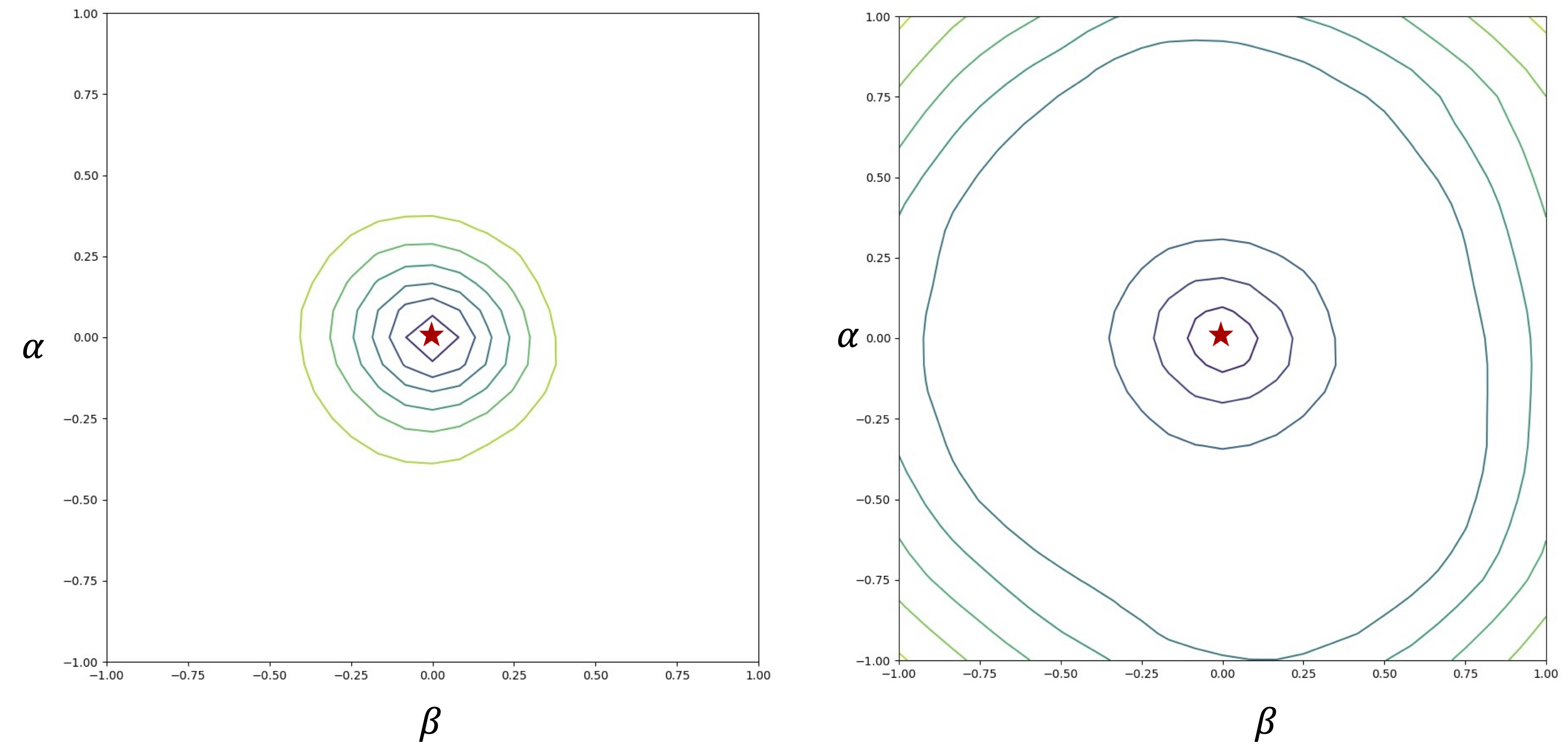}
\end{minipage}
}
\caption{Loss surface visualization of RCE and CE losses. 
 We parameterize the surface with $\alpha$, $\beta$ parameters to vary model parameters and calculate its loss values alongside a 2D grid. In each subplot, the left one visualizes the loss surface of RCE loss; and the right one visualizes the CE loss surface. (a)-(c) are the visualization from different angles; (d) is the loss contour of CE and RCE.\label{fig:loss_surface}}
\end{figure*}

{
\subsubsection{Optimization Difficulty of Noise-Robust Loss.}
{First, we analyze the cause of difficulty in the optimization of robust loss}: (1) \citep{zhang2018generalized} show that the gradients of the cross-entropy loss have an implicit weighting term(as shown in Equation~\ref{equ:mae_loss}), which prioritizes the harder samples during training. On the other hand, this weighting does not exist in noise robust loss functions, which means these functions treat all samples equally. The lack of implicit weighting is claimed by \citep{zhang2018generalized} to be the cause of difficulty in training. 
}
{
\small
\begin{equation}
\sum^n_{i = 1}\frac{\partial \mathcal{L}(f(\boldsymbol{x}_i ; \boldsymbol{\theta}), y_i)}{\partial \boldsymbol{\theta}} = 
\begin{cases}
\sum^n_{i = 1} - \frac{1}{f_{y_i}(\boldsymbol{x}_i ; \boldsymbol{\theta})}\nabla_{\boldsymbol{\theta}} f_{y_i}(\boldsymbol{x}_i ; \boldsymbol{\theta}) \quad \text{for CE}  \\
\sum^n_{i = 1} - \nabla_{\boldsymbol{\theta}} f_{y_i}(\boldsymbol{x}_i ; \boldsymbol{\theta}) \quad \text{for MAE/RCE}.
\end{cases}
\label{equ:mae_loss}
\end{equation}
}
{(2) Our experiment shows that the surface of the robust loss has a wide flat region, so when the parameters are not close to the optimal solution, the gradients could vanish, which leads to difficulty in optimization. The 3D visualization of the loss surface is shown in Figure~\ref{fig:loss_surface}. More details about drawing the figure are shown in~\ref{app:details_loss_curve}.
}

{
{Second, we verify the performance degradation when training the network with robust loss is caused by optimization difficulty, rather than the lack of good solution.} Without loss of generality, we compare the robust-loss $\ell_\text{rce}$ to $\ell_\text{ce}$. More specifically, we train the model with one loss, and use another loss to evaluate the model trajectory.
From the results in Figure~\ref{fig:loss_compare_imdb_ce},
we can clearly see that when optimizing the network with CE, both CE and RCE continue to decline throughout the training, and reach the plateau at the same time, which means that the RCE loss does indeed have a good solution. On the other hand, from the results in Figure~\ref{fig:loss_compare_imdb_rce}, when training with RCE, both CE, and RCE hardly decrease, this further verifies that RCE is difficult to optimize when training the network.
}

\begin{figure*}[h]
\subfigure[LSTM model optimized by $\ell_\text{ce}$]
{
\begin{minipage}[t]{0.48\textwidth}
\includegraphics[width=1\textwidth]{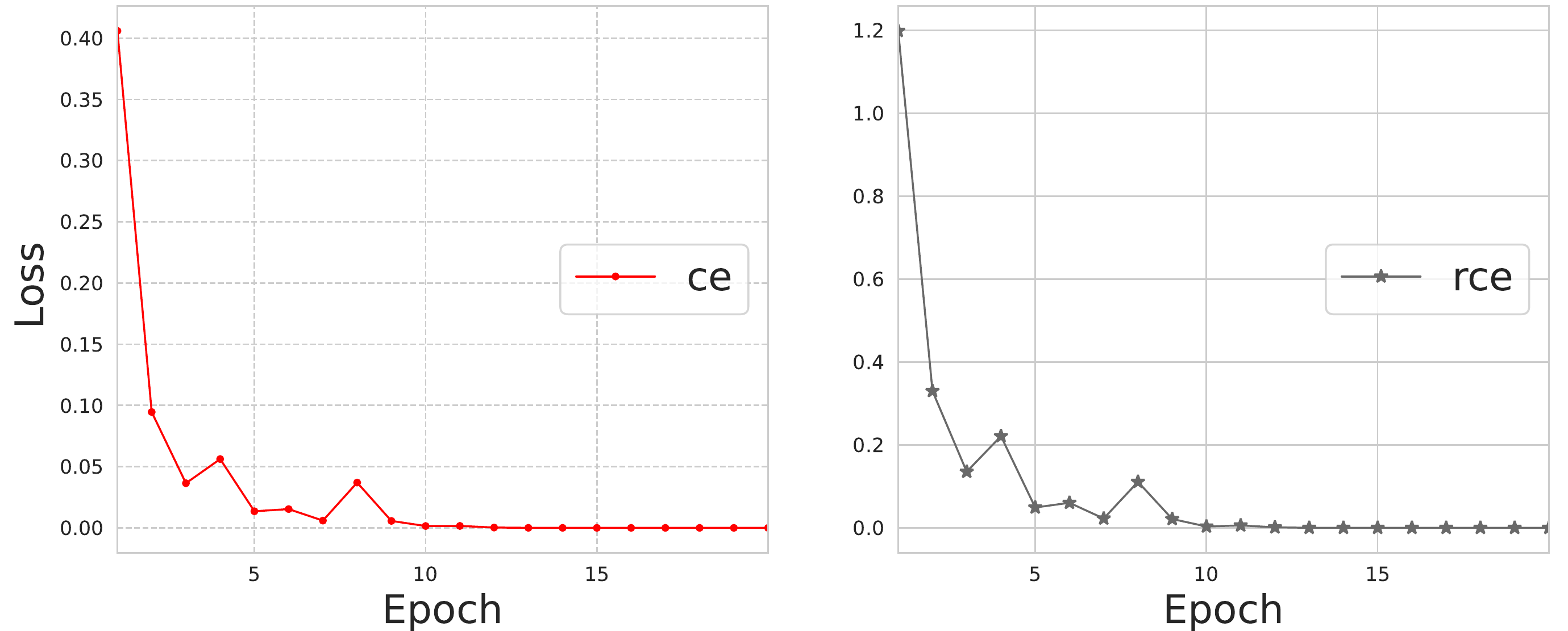}
\label{fig:loss_compare_imdb_ce}
\end{minipage}
}
\hspace{2mm}
\subfigure[LSTM model optimized by $\ell_\text{rce}$]
{
\begin{minipage}[t]{0.48\textwidth}
\includegraphics[width=1\textwidth]{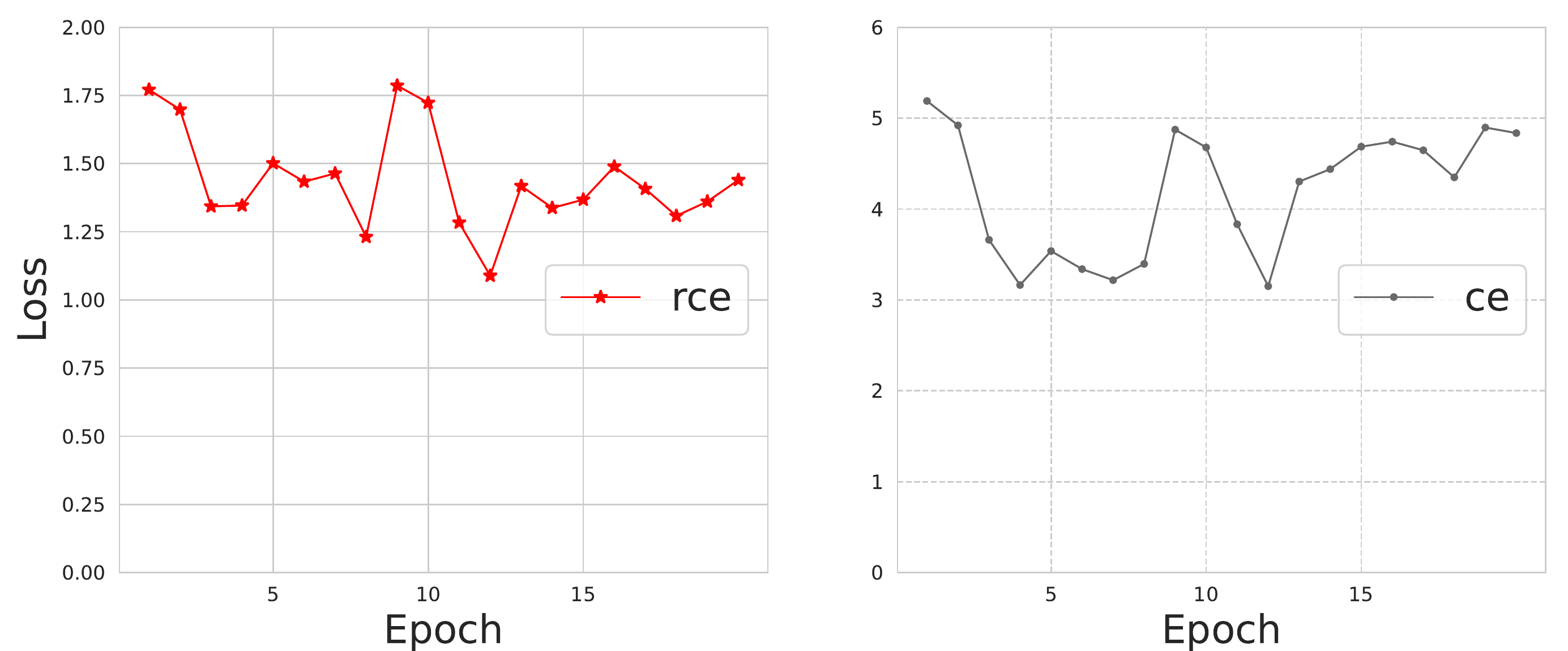}
\label{fig:loss_compare_imdb_rce}
\end{minipage}
}
{\caption{Loss curves of model training. Experiments run on the IMDb standard
training set. The loss used to train the network is {\color{red} in red}, and the other loss used to evaluate the network is {\color{gray}  in grey}. We have run experiments using various learning rates from \{1e-2, 1e-3, 1e-4, 1e-5\} and the curves show similar trends.}}
\end{figure*}

{
\subsubsection{The Optimal Solutions of CE and RCE are Close.}\label{app:details_loss_curve}
{Despite the optimization difficulty of RCE loss, the loss surfaces of CE and RCE losses indicate that the two loss functions have close optimal solutions.} More specifically, we plot the loss surfaces of CE and RCE losses centered around the solution of CE loss following~\citep{loss_visual}, which visualizes the loss surfaces by modifying the model weights in two random directions:
\begin{align}
f(\alpha,\beta)=L(\bs \theta^*+\alpha \bs\delta+\beta \bs\eta) \nonumber
\end{align}
where $\bs \theta^*$ is the model parameters optimized by CE, $\bs \delta$ and $\bs \eta$ are the random directions in the vector space of model parameters.
$\alpha$ and $\beta$ are the scaling coefficients of the two directions, which control how far the parameters are perturbed. We parameterize the surface w.r.t the values of $\alpha$, $\beta$ ranging from [-1, 1], and calculate the loss values at different positions alongside a 2D grid. We show the visualization in Figure~\ref{fig:loss_surface}.
}

{From sub-figure~\ref{fig:loss_surface} (d), we can see that the optimal solution of CE is also close to the optimal solution of RCE (also has the lowest value near zero), which is direct experimental support for the Assumption~\ref{ass:theta_unique_rob_relax}. In addition, when the solution is not close to the optimal position, the rce loss surface is flat and wide, which is the cause of optimization difficulty.
}

\subsection{Proof for Property \ref{property:noise_tolerance}}\label{sec:appendix_property1}
The noise-robust property of robust loss functions have been proven in previous work \citep{ghosh2017robust,wang2019symmetric}, we add them here for completeness. Specifically, we consider three cases of label noise: uniform noise, simple non-uniform noise and class-dependent noise following \citep{ghosh2017robust}. 

\textbf{Uniform Noise.} Given a classification problem with $K$ classes, for a loss function $\ell_\text{robust}$ satisfying Eqn.( \ref{eqn:symmetric_loss}). Then $\ell_\text{robust}$ is noise-tolerant under uniform label noise if the noise rate $\eta < \frac{K-1}{K}$. Let $\Tilde{y}$ denote the noisy label from $\cS_\text{syn}$, the proof is as follows:
\begin{align*}
    \cL_\text{robust}(\bs \theta, \cS_\text{syn})  = & \bbE_{\bx, \Tilde y} [\ell_\text{robust}(fs(\bx; \bs \theta), \Tilde y)] \\
     = & \bbE_{\bx, y} \bbE_{\Tilde y | y,\bx} [\ell_\text{robust}(f(\bx; \bs \theta), \Tilde y)] \\
    = & \bbE_{\bx, y}[ (1-\eta) \ell_\text{robust}(f(\bx; \bs \theta),y) + \frac{\eta}{K-1}\sum_{j \neq y}^K \ell_\text{robust}(f(\bx; \bs \theta),j) ] \\
= & \bbE_{\bx, y}[\frac{K-1 - K \eta}{K-1}\ell_\text{robust}(f(\bx; \bs \theta),y)]+ \frac{\eta C}{K-1}\\
    = & \frac{K-1 - K \eta}{K-1} \cL(\bs \theta, \cS_\text{clean}) + \frac{\eta C}{K-1}
\end{align*}
where $C$ is a constant due to the property of symmetric loss functions. Suppose $\bs \theta^*$ is the optimal solution for the clean dataset $\cS_\text{clean}$.
Therefore, we have the following inequality for any $\bs \theta$: $\cL(\bs \theta^*, \cS_\text{syn}) - \cL(\bs \theta, \cS_\text{syn}) = \frac{K-1 - K \eta}{K-1} (\cL(\bs \theta^*, \cS_\text{clean})-\cL(\bs \theta, \cS_\text{clean}))\leq 0$. Therefore, $\bs \theta^*$ is also the optimal solution for $ \cS_\text{syn}$.

\textbf{Class Dependent Noise.} For a loss function $\ell_\text{robust}$ satisfying Eq \ref{eqn:symmetric_loss}, and $0 \leq \ell_\text{robust}(f(\bx; \bs \theta), i)\leq \frac{C}{K-1}, \forall i \in [K]$, and suppose $\min_{\bs \theta}\cL(\bs \theta, \cS_\text{clean})=0$. Then $\ell_\text{robust}$ is noise-tolerant under classs dependent label noise if $\eta_{ij} < 1-\eta_{i}, \forall j \neq i, \forall i,j \in [K]$, $\eta_{ij}$ represents the probability of class $i$ mislabelled into class $j$. The proof is as follows:
\begin{align*}
    \cL_\text{robust}(\bs \theta, \cS_\text{syn})  
    = & \bbE_{\bx, \Tilde y} [\ell_\text{robust}(f(\bx; \bs \theta), \Tilde y)] \\
     = & \bbE_{\bx, y} \bbE_{\Tilde y | y, \bx} [\ell_\text{robust}(f(\bx; \bs \theta), \Tilde y)] \\
    = & \bbE_{\bx, y}[ (1-\eta_y) \ell_\text{robust}(f(\bx; \bs \theta),y) + \sum_{j \neq y}^K \eta_{yj}\ell_\text{robust}(f(\bx; \bs \theta),j) ] \\
    = & \bbE_{\bx, y}[ (1-\eta_y) (C - \sum_{j \neq y}^K \ell_\text{robust}(f(\bx; \bs \theta),j)) + \sum_{j \neq y}^K \eta_{yj}\ell_\text{robust}(f(\bx; \bs \theta),j) ]\\
    = & C\bbE_{\bx, y}(1-\eta_{y})-\bbE_{\bx, y}\sum_{j \neq y}^K(1-\eta_{y}-\eta_{yj})\ell_\text{robust}(f(\bx; \bs \theta),j)
\end{align*}
Denote $\bs \theta^*$ and $\hat{\bs \theta}^*$ as $\argmin_{\bs \theta} \cL_\text{robust}(\bs \theta, \cS_\text{clean})$ and  $\argmin_{\bs \theta} \cL_\text{robust}(\bs \theta, \cS_\text{syn})$, respectively. Given the above result, we have the following inequality for $\hat{\bs \theta}^*$ and $\bs \theta^*$: 
$$\cL_\text{robust}(\hat{\bs \theta}^*, \cS_\text{syn}) - \cL_\text{robust}(\bs \theta^*, \cS_\text{syn}) = \bbE_{\bx, y}[\sum_{j \neq y}^K(1-\eta_{y}-\eta_{yj})(\ell_\text{robust}(f(\bx; \bs \theta^*),j)-\ell_\text{robust}(f(\bx; \hat{\bs \theta}^*),j))]\leq 0$$
Since $\ell_\text{robust}$ is always non-negative and $\cL_\text{robust}(\bs \theta^*, \cS_\text{clean})=0$, we have that $\ell_\text{robust}(f(\bx; \bs \theta^*),y)=0, \forall \bx$. Thus, we have that $\ell_\text{robust}(f(\bx; \bs \theta^*),i)=\frac{C}{K-1}, \forall i \neq y$ by the symmetric property of $\ell_\text{robust}$. Given the assumption on the label noise, $(1-\eta_{y}-\eta_{yj})>0$. Therefore, for the equality to hold, we musts have $\ell_\text{robust}(f(\bx; \hat{\bs \theta^*},i)=\frac{C}{K-1}, \forall i \neq y$. According to the symmetric property of $\ell_\text{robust}$, we know that $\ell_\text{robust}(f(\bx; \hat{\bs \theta}^*),y)=0, \forall \bx$. Which means that $\hat{\bs \theta}^*$ achieve zero losss on all clean samples as well. Therefore, $\hat{\bs \theta}^*$ is also the minimizer for $\cL_\text{robust}(\bs \theta, \cS_\text{clean})$.

\textbf{Non-Uniform Noise.} For a loss function $\ell_\text{robust}$ satisfying Eq \ref{eqn:symmetric_loss}, and suppose $\min_{\bs \theta}\cL(\bs \theta, \cS_\text{clean})=0$. Then $\ell_\text{robust}$ is noise-tolrant under non-uniform label noise if $\eta_x$ <  $\frac{K-1}{K}, \forall x$. The proof is as follows:
\begin{align*}
    \cL_\text{robust}(\bs \theta, \cS_\text{syn})  = & \bbE_{\bx, \Tilde y} [\ell_\text{robust}(f(\bx; \bs \theta), \Tilde y)] \\
     = & \bbE_{\bx, y} \bbE_{\Tilde y | y, \bx} [\ell_\text{robust}(f(\bx; \bs \theta), \Tilde y)] \\
    = & \bbE_{\bx, y}[ (1-\eta_x) \ell_\text{robust}(f(\bx; \bs \theta),y) + \sum_{j \neq y}^K \frac{\eta_x}{K-1}\ell_\text{robust}(f(\bx; \bs \theta),j) ] \\
    = & \bbE_{\bx, y}(1-\eta_x)\ell_\text{robust}(f(\bx; \bs \theta),y) + \bbE_{\bx, y}\frac{\eta_x}{K-1}(C - \ell_\text{robust}(f(\bx; \bs \theta),y))\\
    = & \bbE_{\bx, y}\frac{C}{K-1} + \bbE_{\bx, y}[(1-\frac{K\eta_x}{K-1})\ell_\text{robust}(f(\bx; \bs \theta),y)]
\end{align*}

Therefore, we have the following inequality for any $\bs \theta$: $\cL_\text{robust}(\bs \theta^*, \cS_\text{syn}) - \cL_\text{robust}(\bs \theta, \cS_\text{syn}) = \bbE_{\bx, y}[(1-\frac{K\eta_x}{K-1})(\ell_\text{robust}(f(\bx; \bs \theta^*),y)-\ell_\text{robust}(f(\bx; \bs \theta),y))]$. Since $\ell_\text{robust}$ is always non-negative and $\cL_\text{robust}(\bs \theta^*, \cS_\text{clean})=0$, we have that $\ell_\text{robust}(f(\bx; \bs \theta^*),y)=0, \forall \bx$. Therefore, $\bs \theta^*$ is also the optimal solution for $\cS_\text{syn}$.

\subsection{Proof for Theorem \ref{thm:identifable_thm}}
\label{app:proof_of_thm_infinite_samples}
\begin{proof}
We first show $\b \theta^*$ exists in the space induced by $\hat {\b \theta}(\bw)$. Then we show our framework uniquely return $\b \theta^*$.

Step 1: Existence.
By Assumption \ref{ass:syn_and_clean}, we know that $\bbP_{syn}(
    \bx, y) w^*(\bx, y) = \bbP_{clean}(
    \bx, y)$ in distribution.
So
\begin{align*}
    \hat {\b\theta}(w^*) = &  \argmin_{\b\theta} \cL_\text{robust}(\b\theta, \cS_\text{syn}(w^*)) \\
    =& \argmin_{\b\theta} \int \int  w^*(\bx, y) \ell((f(\theta, \bx), y) \bbP_{syn}(
    \bx, y) d \bx d\by \\ 
    = & \argmin_{\b\theta} \bbE_{\bbP_{syn}(
    \bx, y) w^*(\bx, y)} \ell((f(\theta, \bx), y) \\
    = & \argmin_{\b\theta} \bbE_{\bbP_{clean}(
    \bx, y)} \ell(f(\b\theta, \bx), y) \\
    = & \b\theta^*.
\end{align*}
Step 2: Uniqueness. By Assumption \ref{ass:theta_unique_rob}, we have for all $\b \theta \neq \b \theta^*$, we have  $\cL_{\text{robust}}(\b \theta^*, \cS_{clean}) < \cL_{\text{robust}}(\b \theta, \cS_{clean})$. So $\b \theta^* = \argmin_{\b \theta} \cL_{\text{robust}}(\b \theta, \cS_{clean})$. Since  $\argmin_{\b\theta}\cL_{\text{robust}}(\b \theta, \cS_{syn}) = \argmin_{\b\theta}\cL_{\text{robust}}(\b \theta, \cS_{clean})$. So we have 
\begin{align*}
    \b \theta^* = \argmin_{\b\theta}\cL_{\text{robust}}(\b \theta, \cS_{syn}).
\end{align*}
Putting the existence and uniqueness parts together, we finish the proof.
\end{proof}
The above theorem shows that there is an explicit mapping between the optimal weighting function $w^*$ and the optimal $\b\theta^*$.  This mapping is achieved by the inner loop in formulation \ref{equ:inner_loop}. We can then optimize
$\cL_\text{ce}(\b\theta(\bw),\cS_\text{clean})$
over $\bw$ to find the optimal $\bw^*$. Given that Assumption \ref{ass:theta_unique_rob} holds, we can switch $\cL_\text{ce}(\b\theta(\bw),\cS_\text{clean})$ to $\cL_\text{robust}(\b\theta(\bw),\cS_\text{clean})$. Finally, thanks to Property \ref{property:noise_tolerance} of $\ell_\text{robust}$, we may simply optimize $\cL_\text{robust}(\b\theta(\bw),\cS_\text{syn})$ over $\bw$ instead. This gives rise to the outer loop in formulation \ref{equ:outer_loop}.

\subsection{Proof for Theorem \ref{thm:finite_samples}}
\label{app:proof_of_thm:finite_samples}
\begin{proof}
Let $\hat \cS^{v,-1}_{syn}$ denotes the dataset that replaces any one element of $\hat \cS^{v}_{syn}$ with arbitrary $\bx$, it is easy to know that 
\begin{align*}
    |\cL_\text{robust}(\bs{\hat \theta}^*( \bw);  \cS^v_{\text{syn}}) - \cL_\text{robust}(\bs{\hat \theta}^*( \bw);  \cS^{v,-1}_{\text{syn}})| \leq \frac{\kappa}{M}
\end{align*}
holds for any $\bw$. Then by the bounded difference inequality (Corollary 2.21 of \cite{wainwright2019high}), given $\bw$, we have with probability $1-\delta$,
\begin{align}
    \label{eqn:basic_inequality}
    \small
    \cL_\text{robust}(\bs{\hat \theta}^*( \bw);  \cS_{\text{syn}}) \leq  \cL_\text{robust}(\bs{\hat \theta}^*( \bw); \cS^v_{\text{syn}}) + \kappa\sqrt{\frac{\ln(1/\delta)}{2M}},
\end{align}
Then we have
\begin{align*}
& \cL_\text{robust}(\bs{\hat \theta}^*(\hat \bw); \cS_{\text{syn}}) \\
\leq & \cL_\text{robust}(\bs{\hat \theta}^*(\hat \bw); \cS^v_{\text{syn}}) + \kappa\sqrt{\frac{2\ln(|\cW|/\delta)}{M}} \\
    \leq &   \cL_\text{robust}(\bs{\hat \theta}^*(\bw);  \cS^v_{\text{syn}})+ \kappa\sqrt{\frac{\ln(|\cW|/\delta)}{2M}} + \epsilon \\
    \leq & \cL_\text{robust}(\bs{\hat \theta}^*(\bw);  \cS^v_{\text{syn}}) + \kappa\sqrt{\frac{\ln(1/\delta)}{2M}} \\ 
    & + \kappa\sqrt{\frac{\ln(|\cW|/\delta)}{2M}} + \epsilon \\
    \leq & \cL_\text{robust}(\bs{\hat \theta}^*(\bw);  \cS_{\text{syn}}) + \kappa\sqrt{\frac{2\ln(|\cW|/\delta)}{M}} + \epsilon ,
\end{align*}
The first inequality because we require inequality \eqref{eqn:basic_inequality} to hold uniformly for all $|\cW|$ functions. The second inequality is because $\hat \bw$ is the $\epsilon$-approximated solution. The third inequality is applying inequality \eqref{eqn:basic_inequality}. The forth inequality is because $|\cW|>1$. Taking infimum over $\bw$ on the right hand side, we obtain the desired bound. 
\end{proof}

\section{Relaxing Assumption \ref{ass:theta_unique_rob}}
Here we provide additional results on relaxed assumptions. We relax Assumption \ref{ass:theta_unique_rob} as follows:
\begin{ass}
The optimal $\b\theta^*$ achieves $\epsilon$-optimal robust loss $\cL_\text{robust}(\bs{\theta}, \cS_\text{clean})$ on the clean dataset, i.e.,  $\cL_\text{robust}(\bs{\theta}^*, \cS_\text{clean}) < \cL_\text{robust}(\bs{\theta}, \cS_\text{clean}) + \epsilon $ for all $\bs{\theta} \neq \bs{\theta}^*$ with $\epsilon>0$.
\label{ass:theta_unique_rob_relax}
\end{ass}\label{app:ass3}

Then we have the following results that extend Theorem \ref{thm:identifable_thm}:
\begin{thm}
\label{thm:identifable_thm_extend}
If Assumption \ref{ass:syn_and_clean} holds, there exists a $\bw^*$ such that 
$\hat{\b\theta}(\bw^*) = \b\theta^*.$
Further with Assumption \ref{ass:theta_unique_rob_relax} and Property \ref{property:noise_tolerance} and assume $\cL_\text{robust}(\bs{\theta}(\bw), \cS_\text{clean})$ is $\mu$-strongly convex and differentiable w.r.t. $\bw$ , our method can return a $\bw$ that is close to $\bw^*$ as follows:
\begin{align*}
   \|\bw - \bw^*\|_2 \leq \sqrt{2\epsilon/\mu}.
\end{align*}
Further, if the minimizer of the robust loss on the clean data collides with $\bs \theta^*$, i.e.,  $\epsilon=0$, we have $\bw = \bw^*$ and $\bs \theta=\bs \theta^*$.
\end{thm}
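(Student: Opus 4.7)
The plan is to extend the argument of Theorem \ref{thm:identifable_thm} by replacing the qualitative uniqueness step with a quantitative bound derived from strong convexity. Existence of $\bw^*$ with $\hat{\b\theta}(\bw^*) = \b\theta^*$ is already established by the change-of-measure computation in Step 1 of the proof of Theorem \ref{thm:identifable_thm}, which relies only on Assumption \ref{ass:syn_and_clean} and carries over verbatim to the relaxed setting. So the only new work is quantifying how close to $\bw^*$ the weights returned by our method land.

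First I would invoke Property \ref{property:noise_tolerance} to identify $\argmin_{\bw} \cL_\text{robust}(\hat{\b\theta}(\bw), \cS_\text{syn})$ with $\argmin_{\bw} \cL_\text{robust}(\hat{\b\theta}(\bw), \cS_\text{clean})$. Denote the latter objective by $F(\bw)$ and its minimizer (the output of our method) by $\hat{\bw}$. The key quantitative step is then to bound the sub-optimality gap $F(\bw^*) - F(\hat{\bw})$. Writing $L^*_r := \min_{\b\theta} \cL_\text{robust}(\b\theta, \cS_\text{clean})$ for the oracle minimum over all parameters, we have $F(\hat{\bw}) \geq L^*_r$ by definition, while $F(\bw^*) = \cL_\text{robust}(\b\theta^*, \cS_\text{clean}) \leq L^*_r + \epsilon$ by the relaxed Assumption \ref{ass:theta_unique_rob_relax}. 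Combining these with the optimality bound $F(\hat{\bw}) \leq F(\bw^*)$ yields $0 \leq F(\bw^*) - F(\hat{\bw}) \leq \epsilon$.

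To convert this sub-optimality gap into a bound on $\|\bw^* - \hat{\bw}\|_2$, I would use $\mu$-strong convexity of $F$ together with the first-order optimality $\nabla F(\hat{\bw}) = 0$, giving the standard inequality $F(\bw^*) \geq F(\hat{\bw}) + \tfrac{\mu}{2}\|\bw^* - \hat{\bw}\|_2^2$. Rearranging and applying the gap bound gives $\|\bw^* - \hat{\bw}\|_2 \leq \sqrt{2\epsilon/\mu}$, the claimed inequality. The $\epsilon = 0$ specialization is immediate: the bound collapses to $\hat{\bw} = \bw^*$, and then Step 1 of Theorem \ref{thm:identifable_thm} gives $\hat{\b\theta}(\hat{\bw}) = \b\theta^*$.

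The main point I expect to have to argue carefully is that the composite function $F(\bw) = \cL_\text{robust}(\hat{\b\theta}(\bw), \cS_\text{clean})$ is genuinely $\mu$-strongly convex in $\bw$: this is a nontrivial coupling of the inner-loop implicit map $\hat{\b\theta}(\cdot)$ with the curvature of the robust loss, and the statement simply posits it as a hypothesis rather than deriving it from the inner-loop structure. A secondary bookkeeping point is ensuring $\bw^*$ lies in the same class $\cW$ over which the outer minimization ranges, so that the comparison $F(\hat{\bw}) \leq F(\bw^*)$ is legitimate; this is handled by reading Assumption \ref{ass:syn_and_clean} as a statement within $\cW$.
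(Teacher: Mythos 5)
Your proposal is correct and follows essentially the same route as the paper's proof: existence of $\bw^*$ via the change-of-measure step of Theorem \ref{thm:identifable_thm}, Property \ref{property:noise_tolerance} to pass from the synthetic to the clean outer objective, and then $\mu$-strong convexity plus first-order optimality at the returned weights combined with the $\epsilon$-suboptimality of $\b\theta^*$ to obtain $\|\bw-\bw^*\|_2 \le \sqrt{2\epsilon/\mu}$. The only cosmetic difference is that you bound the gap through the oracle minimum $\min_{\b\theta}\cL_\text{robust}(\b\theta,\cS_\text{clean})$, whereas the paper applies Assumption \ref{ass:theta_unique_rob_relax} directly at $\hat{\b\theta}(\bar{\bw})$; both give the same $\epsilon$ bound.
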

\begin{proof}
Similar to the first step of the proof of Theorem \ref{thm:identifable_thm} in Appendix \ref{app:proof_of_thm_infinite_samples}, we know $\bs {\hat \theta}(\bs w^*) = \bs \theta^*$. Further by Property \ref{property:noise_tolerance} we know the solution of $\argmin_{\b\theta} \cL_\text{robust}(\b\theta, \cS_\text{syn})$ is the same with that of $\argmin_{\boldsymbol{\theta}} \cL_\text{robust}(\b\theta, \cS_\text{clean})$. We further have
\begin{align*}
   \argmin_{\bw} \cL_\text{robust}(\hat {\bs \theta }(\bw), \cS_\text{syn}) = \argmin_{\bw} \cL_\text{robust}(\hat {\bs \theta }(\bw), \cS_\text{clean}).
\end{align*}
Denote $\bar {\bs w} = \argmin_{\boldsymbol{\theta}} \cL_\text{robust}(\hat{\b\theta}(\bw), \cS_\text{clean})$, it follows that 
\begin{align}
    \label{eqn:gradient_zero}
    \nabla_{\bs w} \cL_\text{robust}(\bar{\b\theta}(\bs w), \cS_\text{clean}) = 0
\end{align}
By Assumption \ref{ass:theta_unique_rob_relax}, we then have
\begin{align}
    \label{eqn:sub_opt_of_theta*}
    \cL_\text{robust}(\b\theta^*, \cS_\text{clean}) \leq \cL_\text{robust}(\hat{\b\theta}(\bar{\bw}), \cS_\text{clean}) +\epsilon.
\end{align}
 We further have
\begin{align}
    \label{eqn:strong_convexity_eqn2}
     \cL_\text{robust}(\b\theta^*, \cS_\text{clean})  &= \cL_\text{robust}(\hat{\b\theta}(\bs w^*), \cS_\text{clean}) \nonumber \\&\geq \cL_\text{robust}(\hat{\b\theta}(\bar{\bw}), \cS_\text{clean}) +
     \nabla_{\bs w} \cL_\text{robust}({\hat{\b\theta}(\bar{\bw})}, \cS_\text{clean})(\bw^* - \bar{\bw}) \nonumber \\&+ \frac{\mu}{2} \| \bw^* - \bar \bw\|^2_2
\end{align}
The equality in the first line is due to $\hat {\bs \theta} (\bw^*) = \bs \theta^*$. The inequality comes from the strong convexity assumption.  
Combining Eqn~\eqref{eqn:gradient_zero}, \eqref{eqn:sub_opt_of_theta*} and \eqref{eqn:strong_convexity_eqn2}, we have
\begin{align*}
    \frac{\mu}{2} \| \bs w^* - \bs {\bar w}\|^2_2 \leq \epsilon.
\end{align*}
The result follows immediately by noting that $\bar{\bw}$ is the output of our method. 
\end{proof}

{
\section{Ablation Study of Outer Objectives}
In the main paper, we directly use RCE as an example of noise-robust loss to verify our framework. Here we conduct additional experiments using the other noise-robust Mean Absolute Error (MAE) loss~\citep{ghosh2017robust}. The results in Table ~\ref{exp:ablation_outer_obj} show that \ourmodel framework using RCE and MAE losses both achieve promising performance, which significantly surpasses baseline methods.}
{
Besides, if we consider the standard cross-entropy(CE) loss as the outer objective, the bi-level framework can only achieve marginal improvement or even worse than \zerogen. The result is reasonable as the standard CE loss in the outer loop run on synthetic data, which cannot provide accurate guidance to update the per-sample weights.
}
\begin{table}[h]
\centering
\vspace{-5mm}
{\caption{Experimental comparison using different outer objectives. Since no clean validation set exists in the zero-shot setting, all the outer objectives are calculated on synthetic data. The experiments run on 200k synthetic data using LSTM as TAM. 
}\label{exp:ablation_outer_obj}}
\scalebox{1.0}{
\begin{tabular}{cc|ccccc}
\toprule
\textbf{Method}& \textbf{Outer Objective} &{\textbf{IMDb}}   & {\textbf{Amazon}} & {\textbf{ Yelp }} \\
\hline
\textsc{ZeroGen} & -&71.52  & 80.48 & 84.95 \\
Bi-level &  CE & 74.05 & 79.81& 83.90 \\ 
\multirow{2}{*}{\ourmodel} &  RCE & 84.10 & 84.22 & 89.06\\
 &  MAE &84.23 & 84.05 & 89.23  \\
\bottomrule
\end{tabular}}
\end{table}

{
\section{\ourmodel vs. Other Noise-Robust Training Strategies}
We compare our method with the label smoothing and temporal ensemble noise-training strategies in Table~\ref{tab:different_noise_robust_strategies}. The experimental results show that our method achieved a significant improvement over these two counterparts. In addition, label smoothing and temporal ensemble are training strategies to alleviate the noise issue, which cannot be used to select a high-quality subset.
\begin{table}[h]
\centering
{
\caption{Comparison between \ourmodel with other noise-robust training strategies. Experiments run on 200k synthetic data using LSTM as TAM.}
\scalebox{1.0}{
\begin{tabular}{c|ccccc}
\toprule
\textbf{Method} &{\textbf{IMDb}}   & {\textbf{Amazon}} & {\textbf{ Yelp }} \\
\hline
\textsc{ZeroGen} & 71.52 & 80.48 & 84.95\\
Label Smoothing & 73.18 & 81.91 & 86.07\\
Temporal Ensemble &74.10& 81.42&85.82 \\
\ourmodel &   84.10 & 84.22 & 89.06\\
\bottomrule
\end{tabular}}
\label{tab:different_noise_robust_strategies}
}
\end{table}
}

{
\section{Experiments on Other  Classification Tasks}
As suggested, we conduct experiments on more difficult tasks of GLUE in Table~\ref{tab:more_classification_tasks}, including the NLI tasks (RTE, QNLI) and paraphrase task (MRPC). The experimental results show that SunGen can consistently surpass the baseline methods. 
For more challenging tasks that need expert knowledge or reasoning ability, we leave them as our future work.
\begin{table}[h]
\centering
{
\caption{Comparison between \ourmodel with other noise-robust training strategies. Experiments run on 20k synthetic data using DistillBERT as TAM.}
\begin{tabular}{c|ccccc}
\toprule
\textbf{Method} & RTE & QNLI & MRPC \\
\hline
\textsc{Prompting}  & 54.51  &60.60 & 65.04 \\
\zerogen & 57.04 & 65.46 &68.71 \\
\ourmodel &  62.82 & 71.82 & 71.25 \\
\bottomrule
\end{tabular}
\label{tab:more_classification_tasks}
}
\end{table}
}

\section{Implementation Details}\label{app:implementation}
\subsection{Dataset}
We evaluate our method on eight widely-used text classification tasks, including IMDb~\citep{DBLP:conf/acl/MaasDPHNP11}, SST-2~\cite{DBLP:conf/emnlp/SocherPWCMNP13}, Rotten Tomatoes~\citep{pang-lee-2005-seeing}, Amazon~\citep{McAuley2013},  Yelp~\citep{zhang_yelp}, Subj~\citep{pang-lee-subj}, AGNews~\citep{zhang_yelp} and DBpedia~\citep{zhang_yelp}.
IMDB, SST-2, and Rotten Tomatoes are sentiment classification benchmarks 
containing positive/negative
movie reviews. 
Amazon and Yelp are comments classification tasks consisting of electronic product reviews and restaurant reviews respectively. 
We choose electronics and restaurant reviews as they are very different from movie reviews. 
Subj is a subjectivity detection task
to justify whether the text contains factual contents or expresses opinions.
AGNews (4-class classification) and DBpedia (14-class classification) are the topic and otologogy classification tasks respectively. Apart from AGNews and DBpedia, other tasks are binary classification tasks. We use full test set for evaluation except for DBpedia, for which we randomly sample 5000 test examples to reduce computational cost.
Sample sizes are listed in Table~\ref{tab:main}.
We report accuracy for evaluation.

\subsection{Full implementation details}
We compare the baselines using GPT2-XL~\citep{radfordlanguage} as PLM. For text generation, we use Nucleus Sampling \citep{DBLP:conf/iclr/HoltzmanBDFC20} with $p=0.9$ as the decoding strategy and use GPT2-XL as the generator. For fair comparison, we use the best prompts designed by \citep{ye2022zerogen} for data generation. 
During the optimization of sample weights, we use  Adam optimizer.
For selecting the appropriate value of the outer learning rate, we select from \{2.5e-1, 1e-1, 1e-2\} by looking at the value of RCE loss in the outer loop. If the outer loss steadily decreases and reaches a low value, then it indicates that the optimization is going well.  
In the inner loop, 1,000k synthetic data are used as the training data; in the outer loop, 50k synthetic samples are randomly sampled as the training data for fast iteration.
We use 1-layer Bi-LSTM and DistilBERT-base as the tiny task model and run it for 1 epoch each time for fast iteration. 
The bilevel procedure is iterated for 50 times for each task.

For task model training, we use 1-layer Bi-LSTM and DistilBERT-base as the light-weight classifiers. 
For LSTM, we use Adam optimizer\citep{DBLP:journals/corr/KingmaB14} with learning rate 1e-3. For DistilBERT-base, we finetune each dataset using Adam optimizer with learning rate 2e-5, and other default hyper-parameters as suggested by HuggingFace Transformers library\citep{DBLP:journals/corr/abs-1910-03771}.  We run LSTM for 5 epochs, and run DistilBERT-base for 3 epochs for prediction. Unless otherwise stated, we run our experiments on 200k data. We compute the average accuracy on test set over 3 runs using different random seeds.

For the baseline using gold data in Table~\ref{tab:outer}, to simulate the scenario where gold data is scarce, we randomly select 1,000 samples from the standard training set as the training data in outer loop. 
For comparison with other denoising baselines shown in Table~\ref{tab:denoise}, we use the techniques as described in the original papers. Specifically, for Confidence \citep{swayamdipta2020dataset}, we use the mean model probability of the true label across epochs as the confidence value and select top 200k examples; for Co-teaching \citep{han2018co}, we use two networks, each is trained with samples selected by the other network based on the loss value; for meta-weight-net \citep{shu2019meta}, since we do not have access to clean data, we use a part of synthetic data as validation. Co-teaching and  meta-weight-net are conducted on 200k synthetic data.

\subsection{Prompts}
For IMDb, SST-2, and Rotten Tomatoes, we use the best prompts designed by \citet{ye2022zerogen} in both \textsc{Prompting} and data-generation settings. For tasks that are not investigated by \zerogen, following \citet{ye2022zerogen},  we manually design five prompts for each task and report the result of the best prompt on \textsc{Prompting}, then we use the same prompt(or minor revision version) for \zerogen and \ourmodel to generate data.
The details of prompts are shown in Table~\ref{tab:promptdesign_sst}.

\begin{table*}[h]
\caption{Prompts used for \textsc{Prompting} and data generation.  Note that \zerogen and \ourmodel use the same prompt for each task.
<c> represents the input movie names/electronic categories/restaurant names. \textit{<$\text{MASK}$>} position will be placed with label words. <x> represents the text that we use PLM to generate. The movie and restaurant names are generated by PLM using prompt \textit{``Movie: '', ``Restaurant: ''}. The electronic categories are 41 categories collected from Amazon website.
}
\label{tab:promptdesign_sst}
\centering
\renewcommand\arraystretch{2}  
\scalebox{0.7}{
\begin{tabular}{llll}
\toprule
\textbf{Setting} & \textbf{Task}  & \textbf{Best Prompt} & \textbf{Label Words}\\
\hline
\multirow{8}{*}{{\textsc{Prompting}}} & IMDb  & \makecell[l]{The movie review in \textit{<$\text{MASK}$>} sentiment is: "<$\mathbf{x}$>}& positive/negative \\
\cline{2-4} 
& Amazon & \makecell[l]{The electronics product review in \textit{<$\text{MASK}$>} sentiment is: "<$\mathbf{x}$> }& positive/negative\\
\cline{2-4} 
 &Yelp & \makecell[l]{The restaurant review in \textit{<$\text{MASK}$>} sentiment is: "<$\mathbf{x}$>} & positive/negative\\
\cline{2-4} 
 &SST-2  & \makecell[l]{The movie review in \textit{<$\text{MASK}$>} sentiment is: "<$\mathbf{x}$>} & positive/negative\\
\cline{2-4} 
 &\makecell[l]{Rotten}  & \makecell[l]{The movie review in \textit{<$\text{MASK}$>} sentiment is: "<$\mathbf{x}$>} & positive/negative\\
\cline{2-4} 
& Subj  & \makecell[l]{The movie review \textit{<$\text{MASK}$>} is: "<x> } & \makecell[l]{containing factual contents \\(objective) / expressing opinions \\(subjective)}   \\
\cline{2-4} 
& AGNews & The news article is in the category of \textit{<$\text{MASK}$>}: "<x>  & \makecell[l]{World/Sports/Business/\\Technology}\\
\cline{2-4} 
& DBpedia &  The article classified to the category of \textit{<$\text{MASK}$>}: "<x> &
\makecell[l]{Company/School/Artist/\\Athlete/ Politician/\\Transportation/
Building/\\Nature/Village/Animal/Plant/\\Album/ Film/Book} 
 \\
\hline
\multirow{8}{*}{\makecell[l]{\zerogen \\ \& \ourmodel}} & IMDb  & \makecell[l]{The movie review in \textit{<$\text{MASK}$>} sentiment for movie "<c>" is: "<$\mathbf{x}$>}& positive/negative \\
\cline{2-4} 
& Amazon & \makecell[l]{The "<c>" product review in \textit{<$\text{MASK}$>} sentiment is: "<$\mathbf{x}$> }& positive/negative\\
\cline{2-4} 
 &Yelp & \makecell[l]{The "<c>" restaurant review in \textit{<$\text{MASK}$>} sentiment is: "<$\mathbf{x}$>} & positive/negative\\
 \cline{2-4} 
 &SST-2  & \makecell[l]{The movie review in \textit{<$\text{MASK}$>} sentiment for movie "<c>" is: "<$\mathbf{x}$>} & positive/negative\\
  \cline{2-4} 
 &\makecell[l]{Rotten}  & \makecell[l]{The movie review in \textit{<$\text{MASK}$>} sentiment for movie "<c>" is: "<$\mathbf{x}$>} & positive/negative\\
  \cline{2-4} 
& Subj  & \makecell[l]{The movie review \textit{<$\text{MASK}$>} is: "<x> } & \makecell[l]{containing factual contents \\(objective) / expressing opinions \\(subjective)}   \\
 \cline{2-4} 
& AGNews & The news article is in the category of \textit{<$\text{MASK}$>}: "<x>  & \makecell[l]{World/Sports/Business/\\Technology}\\
 \cline{2-4} 
& DBpedia &  The article classified to the category of \textit{<$\text{MASK}$>}: "<x> &
\makecell[l]{Company/School/Artist/\\Athlete/ Politician/\\Transportation/
Building/\\Nature/Village/Animal/Plant/\\Album/ Film/Book} 
 \\
\bottomrule
\end{tabular}}
\end{table*}

\section{Bi-Level vs. One-Level Optimization}
We empirically verify the effectiveness of bi-level \ourmodel than
one-level optimization using $l_\text{rce}$ (\textbf{\textit{One-level, $\ell_\text{rce}$}}).
From the results in Table~\ref{tab:onelevel}, we can observe that our framework with bi-level $\ell_\text{rce}$ outperforms both one-level counterparts significantly.

\begin{table}[h]
\centering
\caption{Comparison between our method with one-level $\ell_\text{rce}$ on 1,000k data. LSTM is used as TAM.}
\scalebox{1.0}{
\begin{tabular}{c|ccccc}
\toprule
\textbf{Method} &{\textbf{IMDb}}   & {\textbf{Amazon}} & {\textbf{ Yelp }} \\
\hline
\textit{One-level, $\ell_\text{rce}$} & 81.92 & 81.50 & 85.36 \\ 
\ourmodel & 86.56 & 84.63 & 90.38\\
\bottomrule
\end{tabular}}
\label{tab:onelevel}
\end{table}

\section{Truncated Back-propagation for Meta Gradient.}\label{sec:back_propagation}
For solving the 
bilevel optimization problem, the gradient of $\bs{w}$ can be calculated as follows:
\begin{small}
{
\begin{align}
&~~~~~~\nabla_{\bs{w}} \cL_{\text{robust}} \nonumber\\ 
&= \left.\nabla_{\bs{\theta}}R_{\text{robust}}\right|_{\theta^{*}}\nabla_{\bs{w}}\hat{\b\theta}(\bw)\label{1}\\ 
                    &= \left.\nabla_{\bs{\theta}} \cL_{\text{robust}} \right|_{\theta_{T}}\left.\!\!\sum_{ j \leq T}\!\!\left[\prod_{k<j}\!\!I-\!\!\left.\frac{\partial^{2} \cL{_\text{ce}}}{\partial \bs{\theta} \partial \bs{\theta}^{\intercal}}\right|_{\bs{\theta}_{T-k-1}}\!\!\right]\!\!\frac{\partial^{2} \cL_{\text{ce}}}{\partial \bs{\theta} \partial \boldsymbol{w}^{\intercal}}\right|_{\bs{\theta}_{T-j-1}} \nonumber \\ 
                    &\approx \left.\nabla_{\bs{\theta}} \cL_{\text{robust}}\right|_{\theta_{T}}\left.\frac{\partial^{2} \cL_{\text{ce}}}{\partial \bs{\theta} \partial \boldsymbol{w}^{\intercal}}\right|_{\bs{\theta}_{T-1}} \label{3},
\end{align}}
\end{small}
\hspace{-1.5mm}where Eqn. (\ref{1}) follows chain rule. For computational efficiency, we do not unroll the entire $T$ steps, but perform 1-step truncated backpropagation as in Eqn. (\ref{3}) \citep{shaban2019truncated}.

\section{Comparison with other noise-robust learning methods\label{sec:subset_sampling}}
Our framework has the following advantages against other noise-robust learning methods:
\begin{itemize}
    \item 
    Compared with heuristic methods \citep{han2018co, jiang2018mentornet}, our framework is end-to-end and does not require excessive manual tuning and task-specific knowledge.
    \item 
    Since both the inner and outer objectives are calculated on the same synthetic training set, we do not need any in-domain labeled data, which is a must in the previous end-to-end reweighting methods \citep{ren2018learning, shu2019meta}.
    \item 
    Compared with methods that train the model with $\ell_\text{robust}$ \citep{zhang2018generalized, wang2019symmetric}, our approach leverages $\ell_\text{robust}$ to learn the sample weights, which enables removing the low-quality data without hurting the model's performance 
\end{itemize}

\section{More Related Work}
\paragraph{Coreset Selection}
Coreset selection aims to solve the problem of finding the most informative (influential) subset from the full set, which can be used to solve the optimization problem and obtain similar performance as the full set \cite{feldman2011unified, har2007smaller, tsang2005core, huggins2016coresets,lucic2017training, feldman2011scalable}. However, the most influential samples often corresponds to those that are harder to optimize and incur a larger loss, which may overlap with the harmful samples in settings containing noise. This observation promotes us to design a framework that is able to select a clean subset.
\paragraph{Bilevel Optimization} 
Bilevel optimization (BO) \citep{sinha2017review}, which is what we rely upon when building our algorithm, has received much attention recently. This optimization technique has the ability to tackle problems with hierarchical structures. BO has been successfully adopted in numerous applications, such as hyper-paramter optimization \citep{lorraine2020optimizing,maclaurin2015gradient, mackay2019self, franceschi2017forward,vicol2021unbiased}, neural architecture search \citep{pham2018efficient, liu2018darts, pham2018efficient, shi2020bridging, yao2021joint, gao2022unison, gao2021autobert, shi2021sparsebert}, meta learning \cite{finn2017model, nichol2018reptile}, dataset condensation \citep{wang2018dataset,zhao2020dataset, cazenavette2022dataset, pi2022dynafed} and sample re-weighting \citep{ren2018learning, shu2019meta, pmlr-v162-zhou22d, pmlr-v162-zhou22h}.

\section{Selected and Removed data}
The selected and removed examples are listed in Table~\ref{tab:visulization}.
We take IMDb as the example task.
The observation shows that most of the removed data (data with low weights) have noise label, which indicates the class of text is wrongly labeled by PLM during generation(Noisy Y). Besides, there is a small part of erroneous data which contains unrelated text to the task, meaning the generated text is not a movie review showing obvious emotional tendency(Unrelated X). From Figure~\ref{fig:reweight_all}(d), we find the percentage of the erroneous data is small, but it significantly degrades the model performance(e.g., IMDB accuracy is decreased from 86.56 to 78.29 in Table~\ref{tab:diffsize}). For selected data by \ourmodel, we find they are actually well-written transitional complex sentences (bottom part of Table~\ref{tab:visulization}), which verifies that \ourmodel tends to select correct and challenging samples.

\section{Why loss-value-based de-noise methods does not work in our situation?}\label{app:loss-based-denoise}
 We further empirically analyze why the popular loss-value-based methods~\citep{shu2019meta,han2018co} fail to help sample noise-free dataset in our scenario. For this experiment, we manually construct a dataset with a 30\% label noise using the training set of SST-2 by randomly flipping the class label. 
As shown in figure~\ref{fig:loss-analysis}, the loss values of the correctly labelled data and the mislabelled data are still clustered together. We conjecture that this is due to the nature of the task: not all tasks demonstrate the phenomenon that noisy data demonstrate higher loss values, which is also mentioned in works related to instance dependent noise learning \citep{cheng2020learning}. Therefore, selecting subset based on the loss value is not applicable in our scenario.

\begin{figure*}[t]
\vspace{-2mm}
\centering
\subfigure[\small Epoch 0(54\%).]{\includegraphics[width=0.18\textwidth]{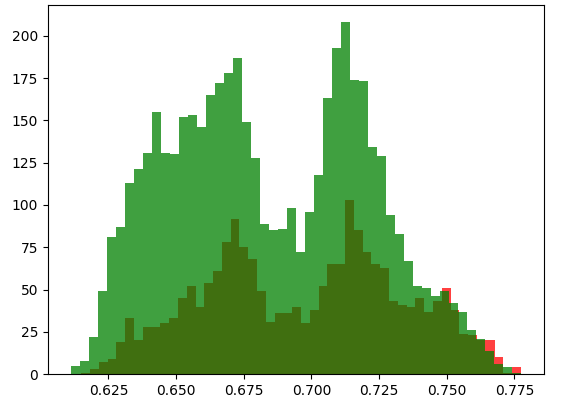}}
\subfigure[\small Epoch 1( 61\%).]{\includegraphics[width=0.18\textwidth]{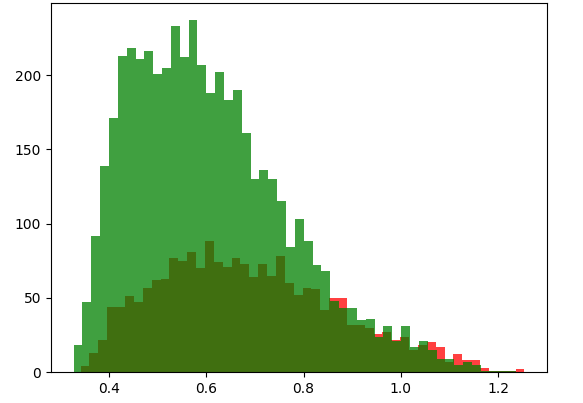}}
\subfigure[Epoch 2(63\%).]{\includegraphics[width=0.18\textwidth]{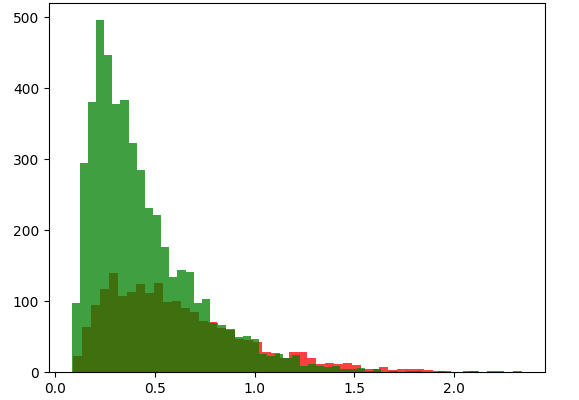}}
\subfigure[Epoch 3(60\%).]{\includegraphics[width=0.18\textwidth]{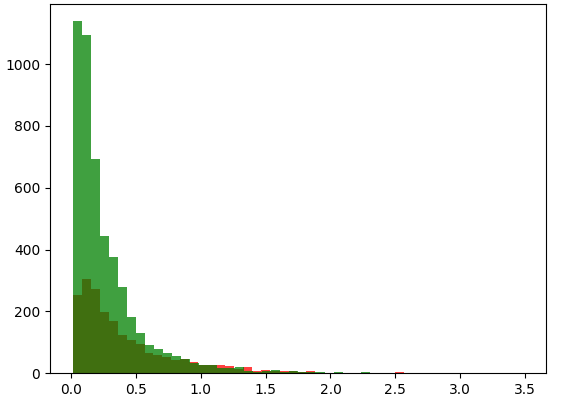}}
\subfigure[Epoch 4(62\%).]{\includegraphics[width=0.18\textwidth]{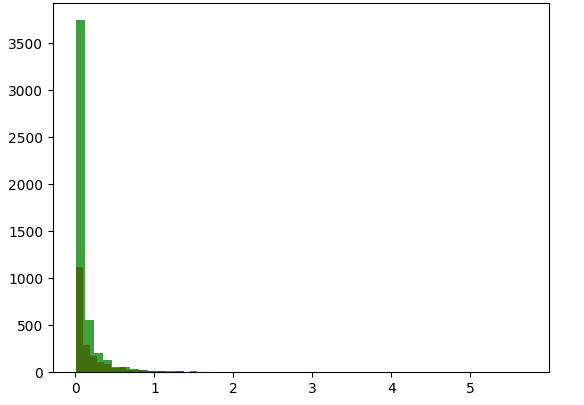}}
\vspace{-3mm}
\caption{Loss histogram of SST-2 with 30\% uniform label noise. Clean data and noisy data are marked by green and red respectively. The accuracy is listed in the parentheses. We observe that the loss value can not separate the clean data from the noisy ones. Therefore, the methods loss-value-based methods may not work well in our case.}
\label{fig:loss-analysis}
\end{figure*}

\end{document}